\newtheorem{thmdef}{Definition}
\newtheorem{thmthm}{Theorem}
\newtheorem{thmasmp}{Assumption}
\newtheorem{thmappdef}{Definition}
\newtheorem{thmapplem}{Lemma}
\newtheorem{thmappcol}{Corollary}
\newtheorem{thmappthm}{Theorem}
\newenvironment{thmproof}[1][Proof]{\begin{trivlist}
\item[\hskip \labelsep {\textit{#1.}}]}{\end{trivlist}}
\newenvironment{thmproofsketch}[1][Proof sketch]{\begin{trivlist}
\item[\hskip \labelsep {\textit{#1.}}]}{\end{trivlist}}
\def\E{\mathbb{E}}
\def\indic{\mathds{1}}
\def\cN{\mathcal{N}}
\def\cD{\mathcal D}
\def\cC{\mathcal C}
\def\cX{\mathcal X}
\def\cY{\mathcal Y}
\def\cF{\mathcal F}
\def \R{\mathbb{R}}
\def\bsf{\boldsymbol{f}}
\def\iw{\mathrm{IW}}
\newcommand\indep{\protect\mathpalette{\protect\independenT}{\perp}}
\def\independenT#1#2{\mathrel{\rlap{$#1#2$}\mkern2mu{#1#2}}}
\newcommand\bdt[1][.8]{\mathbin{\vcenter{\hbox{\scalebox{#1}{$\bullet$}}}}}
\icmltitlerunning{Estimation of Bounds on Potential Outcomes For Decision Making}
\begin{document}

\twocolumn[
\icmltitle{Estimation of Bounds on Potential Outcomes For Decision Making}

\icmlsetsymbol{equal}{*}

\begin{icmlauthorlist}
\icmlauthor{Maggie Makar}{csail}
\icmlauthor{Fredrik Johansson}{fj}
\icmlauthor{John Guttag}{csail}
\icmlauthor{David Sontag}{csail}
\end{icmlauthorlist}

\icmlaffiliation{csail}{CSAIL, MIT}
\icmlaffiliation{fj}{Chalmers University of Technology}

\icmlcorrespondingauthor{Maggie Makar}{mmakar@mit.edu}

\icmlkeywords{machine learning, causal inference, causality}

\vskip 0.3in
]

\printAffiliationsAndNotice{}  % leave blank if no need to mention equal contribution
% \printAffiliationsAndNotice{\icmlEqualContribution} % otherwise use the standard text.

\begin{abstract}
Estimation of individual treatment effects is commonly used as the basis for contextual decision making in fields such as healthcare, education, and economics. However, it is often sufficient for the decision maker to have estimates of upper and lower bounds on the potential outcomes of decision alternatives to assess risks and benefits. We show that, in such cases, we can improve sample efficiency by estimating simple functions that bound these outcomes instead of estimating their conditional expectations, which may be complex and hard to estimate. Our analysis highlights a trade-off between the complexity of the learning task and the confidence with which the learned bounds hold. Guided by these findings, we develop an algorithm for learning upper and lower bounds on potential outcomes which optimize an objective function defined by the decision maker, subject to the probability that bounds are violated being small. Using a clinical dataset and a well-known causality benchmark, we demonstrate that our algorithm outperforms baselines, providing tighter, more reliable bounds. 
\end{abstract}

%---------------------------------%
%-----------Introduction----------%
%---------------------------------%
\section{Introduction}
In many practical situations, a decision maker wishes to intervene or assign a treatment to ensure that an outcome of interest falls within a safe range. One example, which we use throughout the paper, is when a physician considers whether or not to prescribe anticoagulants to mitigate the risk of stroke, as measured by the International Normalized Ratio (INR). The INR reflects the time it takes for blood to clot. For previous stroke patients, a healthy INR is 2--3. Values lower than 2 signal elevated risk of an Ischemic stroke, and higher than 3 signal elevated risk of a Hemorrhagic stroke. To make an informed decision, the physician needs to know if the potential outcomes under treatment and non-treatment fall within 2--3. Learning that the difference between the potential outcomes, i.e., the Individual Treatment Effect (ITE) is 1.5, does not immediately imply an optimal treatment decision; it could be that the patient's INR decreases from 4 to 2.5 or from 5.5 to 4. More information about the potential outcomes themselves is needed, but knowing their exact value is not necessary. It is sufficient to know that the patient's INR is somewhere between 2.1 and 2.9 if treated. For example, knowing that it is 2.853 does not provide additional insight. For these two reasons, we study the task of estimating reliable covariate-conditional bounds on potential outcomes using observational data. 

Most existing methods for estimating causal effects and potential outcomes attempt to fit the expected outcomes as functions of observed covariates, typically relying on variants of Empirical Risk Minimization (ERM) strategies \citep{hill2011bayesian, shalit2017estimating, alaa18, alaa17}. Some of these methods produce prediction intervals centered around the estimated expected response (outcome) surface, which can be used to bound the potential outcome from above and below. These intervals have approximately valid coverage for large samples, provided that the mean estimate is sufficiently unbiased. However, achieving this is not always feasible in small samples, leading to high false coverage rates (FCRs), defined as the rate at which outcomes are observed outside of the given prediction interval.

Instead of attempting to directly fit the potential outcomes, which may be complex and hard to estimate from small samples, we propose to fit simpler functions that bound the outcomes from above and below. Within this simpler function class, we identify estimates of the potential outcomes that maximize a utility (objective) function specified by the decision maker. Figure~\ref{fig:illust} shows the intuition behind our approach. For example, if the decision maker wants to ensure that the uncertainty in the potential outcome estimates is small on average, they could require that the average interval width (= upper bound - lower bound) is small. Alternatively, if they wish to ensure that no patient sub-population has excessively uncertain estimates (i.e., wide intervals) they could require that the maximum interval width is minimized. 

\begin{figure}[t!]
    \centering
    \includegraphics[width=.5\columnwidth]{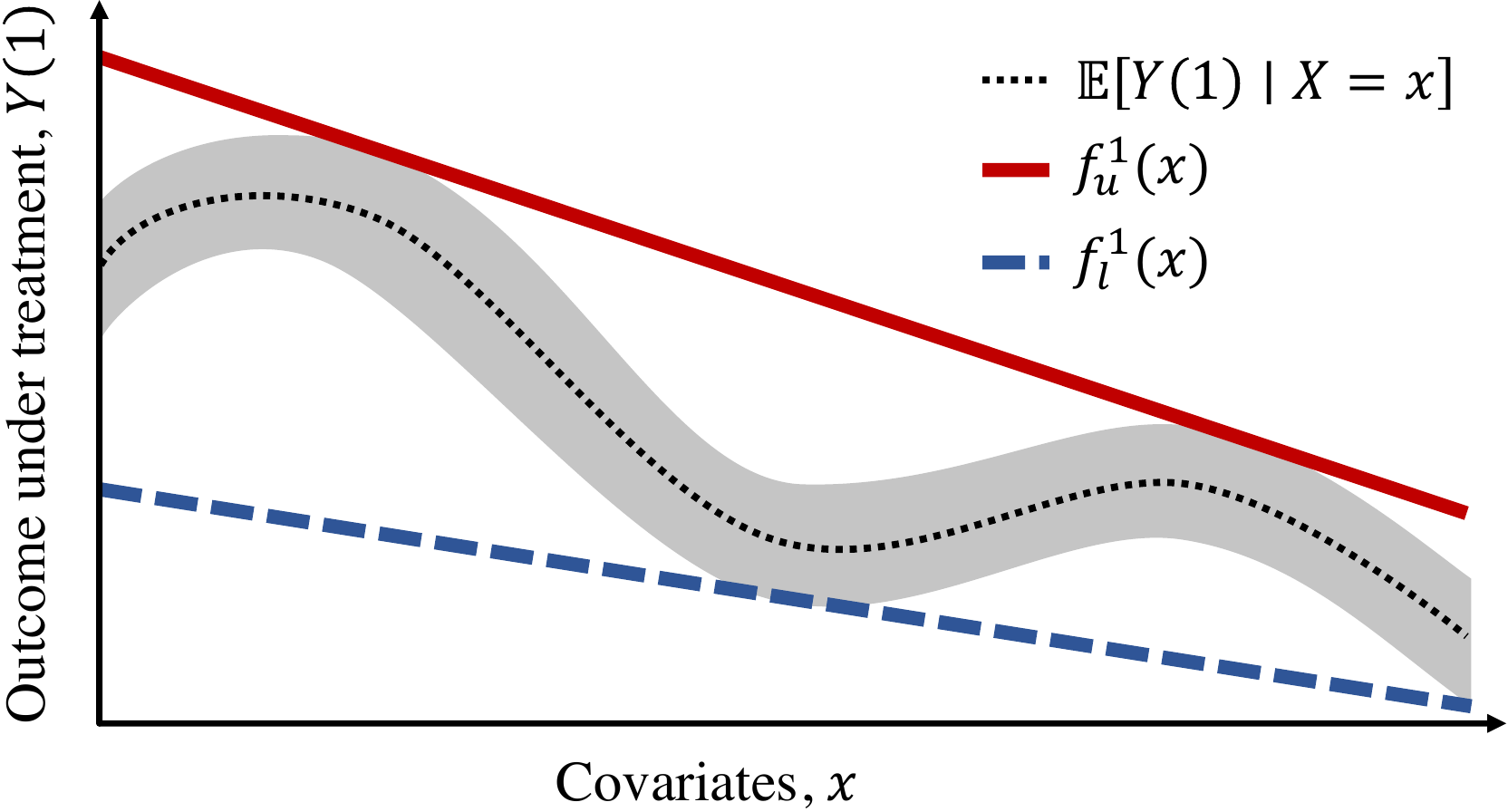}\vspace{-.5em}%
    \caption{Illustration of the intuition behind our theoretical findings. While the true potential outcome (black/gray) belongs to a complex class, the upper (red) and lower (blue) bounds $f_u^1, f_l^1$ that correctly cover it belong to a simple linear function space. \vspace{-.5em} \label{fig:illust}}
\end{figure}

We make the following main contributions:
\begin{enumerate*}[label=(\roman*)]
\item We give results on the generalization properties of learned bounds on potential outcomes and the conditions under which estimation of such bounds yields better sample complexity than fitting the expected outcomes using standard risk minimization methods. Our analysis highlights a trade-off between reliability (i.e., the probability that the bounds correctly cover the data) and the complexity of the learning task. 
\item We design an algorithm that finds the optimal bound estimates that maximize a given utility or objective function while providing reliable bounds. We explore different objective functions, analyzing the differences between the resulting bounds, and prove equivalence to quantile regression in a special case.
\item We evaluate our algorithm on a semi-synthetic clinical dataset and a well-known causality benchmark. We show how it can guide treatment decisions, and that it achieves a better trade-off between bound violations and utility than baseline algorithms. 
\end{enumerate*}

\section{Related work}
%
%-----cate estimation 
Research into methods for estimating conditional causal effects has focused primarily on estimating the expected potential outcomes or conditional average treatment effect (CATE) as functions of observed covariates~\citep{dorie2019automated}. For example, \citet{alaa18} showed that the CATE estimation problem is as hard as modelling the more complex of the two potential outcomes in the minimax sense. Similarly, \citet{nie2017quasi} show asymptotic bounds that rely on the complexity of the underlying function class of the CATE. More generally, recent work in CATE estimation has focused on the learning challenges associated with the difference between the treated and control populations, and on improving finite sample efficiency by sharing data between treatment groups~\citep{johansson2016learning,shalit2017estimating,alaa17,hill2011bayesian}. In contrast, we aim to improve sample efficiency by providing bounds on the causal estimands.

%----cate/ate lower bounds 
Other work focuses on estimating lower or upper bounds of Average Treatment Effect (ATE), to account for the possibility of unobserved confounding~\citep{balke1997bounds,bareinboim2012controlling,pearl2009causality,cai2008bounds}. Recently, this type of analysis was extended to include bounds on CATE, but again in the presence of hidden confounding \citep{kallus2018interval}. This line of work falls under sensitivity analysis \cite{rosenbaum2014sensitivity}, which is distinct from our work in that we aim to find bounds on the potential outcomes even in the absence of unobserved confounding. 

%----quantile estimation in cate 

Another related line of work is the problem of conditional quantile treatment effect estimation \citep{koenker1978regression,chernozhukov2005iv}. Like our method, quantile methods give can give approximate bounds on the potential outcomes. The distinction is that the main objective of our method is not to estimate the specific quantile of treatment effect, but rather to provide the simplest functions that bound the outcomes such that an objective function given by the decision maker is optimized; we do not wish in general to establish asymptotic convergence to a particular quantile of the treatment effect. However, as we prove later, quantile estimation is a special case of our setting for a certain objective function. 

At the time of publication of this paper, new work extended conformal intervals \cite{lei2018distribution} to settings similar to ours, where the outcomes are counterfactual \cite{lei2020conformal}. Our work is distinct from the work presented in \cite{lei2020conformal} in three ways (1) we provide theoretical guarantees for the \emph{finite} sample rather than asymptotic regime, (2) our theoretical analysis highlights a fundamental trade-off between the statistical complexity of the learning problem and the confidence with which the learned interval truly covers the potential outcomes. Finally, (3) our approach allows for a more general definition of interval optimality; we not assume that tightness of the bounds is the only important metric to be optimized, but it allows the decision maker to define their own desiderata for optimality (e.g., fairness).

%----RL/DTR
Our work is related to offline policy learning (e.g., \citet{swaminathan15normalized,swaminathan2015counterfactual}). The main difference between this work and ours is that we wish to obtain bounds for the potential outcomes, not just an optimal policy. This allows the decision maker to consider the estimated effect of the treatment against a backdrop of additional information that may not be recorded in the observational data.  
\vspace{-.5em}

%---------------------------------%
%-----------Background------------%
%---------------------------------%
\section{Background}

We consider learning of bounds on potential outcomes from finite-sample observational data, adopting the notation of the Neyman-Rubin potential outcomes framework \citep{rubin2011causal}. For each unit $i$ (e.g. patient), we observe a set of features $X_i \in \cX$, with $\cX$ a bounded subset of $\R^d$, an action (also known as treatment or intervention) $T_i \in \{0,1\}$ and an outcome $Y_i \in \R$. We observe these variables through samples $(x_1, t_1, y_1), ..., (x_n, t_n, y_n) \stackrel{i.i.d.}{\sim} p(X, T, Y)$ and denote by $n_t = \sum_{i=1}^n \indic\{t_i = t\}$ the number of observed samples for  treatment group $t\in \{0,1\}$, and let $p_t(X) = p(X\mid T=t)$. The observed outcome is one of the two \emph{potential outcomes}, $Y(0)$ and $Y(1)$, under control ($T=0$) and treatment ($T=1$), respectively. We use $\| a\|_p$ to denote the $p-$norm of a vector a. When the subscript is omitted, we refer to the 2-norm. 

We seek to learn high-probability bounds on both potential outcomes, $Y(0)$ and $Y(1)$, conditioned on the set of observed features $X$. Since only one outcome is observed, the other is not identifiable without strong assumptions. To that end, we assume that the features $X$ are sufficient to deconfound estimates of $Y(0), Y(1)$:
\begin{thmasmp}\label{assump:id}
The features $X$, treatment $T$ and potential outcomes $Y(0), Y(1)$ satisfy for some $\epsilon > 0$

\begin{enumerate}[topsep=0pt,itemsep=-0.5ex,partopsep=1ex,parsep=1ex]
    \item Strong ignorability: $Y(0), Y(1) \indep T \mid X$
    \item Overlap:  $\forall x, t : p(T=t \mid x) > \epsilon$
    \item Consistency: $Y = Y(T)$
\end{enumerate}

\end{thmasmp}
Under Assumption~\ref{assump:id}, $p(Y(t)=y \mid X=x) = p(Y=y \mid T=t, X=x)$~\citep{imbens2009recent}. This means that the distribution of potential outcomes can be estimated through regression or other standard methods. 
When treatment and outcomes are confounded, estimates of causal effects exhibit bias. For example, if medication A was prescribed more often to terminally ill patients than the alternative treatment B, we might learn that the life expectancy on treatment A was lower than on B, regardless of its average causal effect. To undo this bias, it is common to use the propensity score $e(x,t) := p(T=t\mid X=x)$ to re-weight the cohort using importance weighting.

\begin{thmdef}\label{def:pscore}
    The importance weighting function $w_t$ for group $t\in \{0,1\}$ is
    $
    w_t(x) := p(T=t) / e(x,t) ~.
    $
\end{thmdef}
We use $w_i$ to denote $w_{t_i}(x_i)$ for a sample $(x_i, t_i)\sim p$. With $w_t$ as in Definition~\ref{def:pscore}, we have for an arbitrary function $f$ on $\cX$ (e.g., the expected outcome or a prediction loss),
$
\E_X[f(X)] = \E_{X\mid T}[w_t(X) f(X) \mid T=t]~.
$
By Assumption~\ref{assump:id}, we have that the importance weights are bounded, meaning that for some $C_t < \infty$ and $t \in \{0, 1\}$:
\begin{align}\label{eqn:cortes_c}
    \sup_{x\in \cX} w_t(x) = \sup_{x\in \cX} \frac{p(T=t)}{e(x, t)} = 2^{D_\infty(p|| p_t)} = C_t, 
\end{align}
where $D_k(p || q)$ is the k$^{\text{th}}$-order R\'enyi divergence, and the second equality follows by applying the Bayes rule, and the definition of the R\'enyi divergence. It will be convenient to denote $2^{D_k(p||q)}$ by $d_k(p||q)$. 
%For $k=2$, the R\'enyi divergence coincides with the relative entropy. 
Since $2^{D_{k-1}(p|| p_t)} < 2^{D_k(p|| p_t)}$, we have $d_2(p|| p_t) < C_t$.

%---------------------------------%
%---------Generalization----------%
%---------------------------------%
\section{Generalization of bounds on potential outcomes}\label{sect:gen_ge}

Our goal is to estimate four functions; lower and upper bounds for the potential outcome under treatment, $\boldsymbol{f^1}(x) = \{f^1_l(x),f^1_u(x)\}$, and similarly defined functions for the outcome under control $\boldsymbol{f^0}(x) = \{ f^0_l(x),f^0_u(x) \}$. For these estimates to be useful for decision-making, we want to make the assertion that for some small $\nu' >0$, and for $t\in \{0,1\}$, we have false coverage rate (FCR) bounded by $\nu'$,
\begin{align}\label{eq:fcr}
    \text{FCR}_{\boldsymbol{f^t}} := \Pr_{X, Y(t)}\bigg[Y(t) \not\in [f^t_l(X), f^t_u(X)] \bigg] \leq \nu'~.
\end{align}
Without loss of generality, we will focus on estimating a lower bound for the outcome under treatment $T=t$, meaning we will focus on finding some  $f^t_l(x)$ such that for a small $\nu >0$, we have that
\begin{align}\label{exp:probf1}
    \Pr_{X, Y(t)}[f^t_l(X) \leq Y(t)] \geq 1 - \nu. 
\end{align}
Note that in expressions~\ref{eq:fcr} and~\ref{exp:probf1} the probabilities are defined over $p(X, Y(t)) \neq p(X, Y\mid T=t)$, due to confounding. However, under Assumption 1, this probability is identifiable from observed data. 

It will be useful to restate our objective in terms of the (signed) residual of a function $f$, defined next. 
\begin{thmdef} For an arbitrary function $f$, the signed residuals for $x,y \in \cX \times \cY$:
$
    \underline{r}_f(x, y) = y - f(x).
$
\end{thmdef}
Expression~\eqref{exp:probf1} can be restated as $\Pr[\underline{r}_{f^t_l}(X, Y(t)) \geq 0] \geq 1-\nu$. To be more cautious, we might wish to leave a ``buffer zone'' or a margin, and instead demand that $\underline{r}_{f^t_l}(x, y) \geq \gamma$ for some $\gamma >0$. In this setting, a violation occurs when $\underline{r}_{f^t_l}(x, y) < \gamma$. Larger values of $\gamma$ would imply higher reliability: we are more confident that we are unlikely to observe a violation of the bounds, i.e., unlikely to overestimate the outcome under treatment $t$. With that, direct parallels could be drawn between our setup and that of maximum-margin algorithms: we want to ensure that the signed residual is larger than 0 by a margin of $\gamma$. The larger $\gamma$ is, the more confident we are that our lower bound holds. We can now define the unobserved risk that we wish to study: 
\begin{thmdef}
For $f^t_l \in \cF$, $\gamma >0$, we define the risk of overestimation over the full unknown distribution: 
\begin{align*}
   \underline{R}_{f^t_l}(\gamma) & = \mathbb{E}_{X, Y(t)}\left[ \indic \{\underline{r}_{f^t_l}(X,Y(t)) < \gamma \} \right]. 
\end{align*}
\end{thmdef}
To account for confounding due to biased (non-randomized) treatment assignment, we consider a re-weighted risk:
\begin{align*}
   \underline{R}^w_{f^t_l}(\gamma) & = \mathbb{E}_{X, Y \mid T}\left[   w(x)\indic \{ \underline{r}_{f^t_l}(X, Y) < \gamma \} \mid T=t \right]
\end{align*} 
Under Assumption~\ref{assump:id}, $\underline{R}_{f^t_l}(\gamma) = \underline{R}^{w_1}_{f^t_l}(\gamma)$. Since our notions of confidence are closely related to the margin, $\gamma$, it will be more useful to reason about the magnitude of margin violations, which is defined next. 
\begin{thmdef} 
For $ z = \{ x_i, y_i\}_{i: t_i = t}$, where $x_i, y_i \sim p_t(X, Y)$, known $\boldsymbol{w}_t$, $f^t_l \in \cF$, and $\gamma >0$, we define the average weighted magnitude of training set violations as 
    \begin{align*}
                \underline{D}^{\boldsymbol{w}_t}(z, f^t_l, \gamma) = \sum_{x,y \in z}  w_t(x) \max\{0, \gamma - \underline{r}_{f^t_l}(x,y) \} 
    \end{align*}
\end{thmdef}

In the remainder of this section, we give bounds on expected margin violation as a function of $\underline{D}^{\boldsymbol{w}_t}$. We restrict our analyses to sturdy function classes, as defined in \cite{st_williamson_99} with with range = $[a, b]$. Informally, sturdy function classes have images that are compact subsets of $\mathbb{R}$. We rely on the covering number as a measure of complexity of the analyzed function classes. We use fat-shattering dimensions to study how fast the complexity of a function class can grow with the sample size. Explicit definitions of these three concepts are presented in the supplement (definitions~\ref{def:sturdy}, \ref{def:cover} and~\ref{def:fat} respectively). 

%-----lemma 
\subsection{Generalization of reliable estimators}
We start by studying the risk of overestimation for re-weighted estimators. To make our main finding easy to follow, we focus on the class of linear functions in a kernel defined feature space. Theorem~\ref{thm:soft_margin_general} in the supplement gives the analogous bounds for more general function spaces. 

\begin{thmthm}\label{thm:soft_margin}
Let $\cF$ be the class of linear functions in a kernel defined feature space, $ z = \{ x_i, y_i\}_{i: t_i = t}$, where $x_i, y_i \sim p_t(X, Y)$, and $C_t$ be as defined in expression~\eqref{eqn:cortes_c}. For $f^t_l \in \cF$, and any $\gamma>0$, let the associated $\underline{D}^{\boldsymbol{w}_t}(z, f_t^l, \gamma ) = D >0$. With a probability $ 1-\delta$ over the draw of random samples, we have that: 
\begin{equation}
\begin{aligned}
  \underline{R}_{f^t_l}(\gamma) & \leq \frac{4 C_t (k_t + \log\frac{1}{\delta} )}{3n_t} + \sqrt{\frac{8 d_2(p||p_t) ( k_t + \log\frac{1}{\delta})}{n_t}}
 \end{aligned}\label{eq:thm1}
\end{equation}
where, for $t\in \{0,1\}$,
\begin{align*}
   k_t  & = \bigg\lceil \log \cN(\sfrac{\gamma}{2}, \cF, 2n_t)  + \frac{D}{\gamma} \log \frac{\exp(n_t + \sfrac{D}{\gamma} -1)}{\sfrac{D}{\gamma}} \bigg\rceil~.
\end{align*}
\end{thmthm}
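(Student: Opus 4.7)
The plan is to combine a soft-margin covering-number argument for the function class $\cF$ with an importance-weighted Bernstein concentration inequality whose variance term is controlled by $d_2(p\|p_t)$ and whose almost-sure bound is controlled by $C_t$. The two characteristic terms on the right-hand side of~\eqref{eq:thm1} (the $C_t/n_t$ term and the $\sqrt{d_2/n_t}$ term) strongly suggest a Cortes--Mansour--Mohri style Bernstein bound applied to a margin-indicator loss, while the additive structure inside $k_t$ (a covering term $\log\cN(\gamma/2,\cF,2n_t)$ plus a $\frac{D}{\gamma}\log\frac{\exp(n_t+D/\gamma-1)}{D/\gamma}$ term) strongly suggests a Shawe-Taylor--Williamson style hard-to-soft margin reduction followed by a union bound over the cover.

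First, I would reduce the soft-margin problem to a hard-margin one. For a fixed $f^t_l\in\cF$ with total weighted training slack $D = \underline{D}^{\boldsymbol{w}_t}(z,f^t_l,\gamma)$, I would split the aggregate slack into at most $\lceil D/\gamma\rceil$ unit pieces and enumerate all ways of distributing these pieces across the $n_t$ training points. Standard compositions-counting gives a log-cardinality of the form $\log\binom{n_t + D/\gamma - 1}{D/\gamma}$, which via Stirling is dominated by $\frac{D}{\gamma}\log\frac{\exp(n_t + D/\gamma - 1)}{D/\gamma}$. This is precisely the second contribution to $k_t$. Second, I would pass from the original (possibly uncountable) class $\cF$ to a $\gamma/2$-cover, so that any $f\in\cF$ committing a margin-$\gamma$ violation on a test point implies some cover element commits a margin-$\gamma/2$ violation on that point; this contributes $\log \cN(\gamma/2,\cF,2n_t)$, with the factor of $2n_t$ coming from a ghost-sample symmetrization. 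Together these two ingredients give an effective hypothesis count of order $\exp(k_t)$.

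Third, for each fixed element of the cover and each fixed slack configuration, I would apply a one-sided importance-weighted Bernstein inequality to the nonnegative margin-indicator loss $w_t(X)\indic\{\underline{r}_{f^t_l}(X,Y)<\gamma\}$. Under Assumption~\ref{assump:id} and Definition~\ref{def:pscore}, this loss is almost-surely bounded by $C_t$, while its second moment under $p_t$ equals $\E_{p_t}[w_t(X)^2\indic\{\cdot\}] \le \E_{p_t}[w_t(X)^2] = d_2(p\|p_t)\cdot \E_p[w_t(X)]$, giving the $d_2$ factor in the variance term. Bernstein's inequality then yields, for a single hypothesis at confidence $\delta'$, the familiar two-term bound $\tfrac{4C_t \log(1/\delta')}{3 n_t} + \sqrt{\tfrac{8\, d_2(p\|p_t)\log(1/\delta')}{n_t}}$. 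Taking $\log(1/\delta') = k_t + \log(1/\delta)$ via a union bound over the cover and the slack enumeration, and using that the empirical weighted violation count at margin $\gamma/2$ is zero on the combinatorially augmented hard-margin problem, reproduces~\eqref{eq:thm1} exactly.

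The main obstacle will be the interaction between the \emph{weighted} slack $D$ and the combinatorial slack enumeration, which classically is stated for unweighted counts of violating examples. I expect it to go through because each sample contributes $w_t(x_i)\max\{0,\gamma-\underline{r}_{f^t_l}(x_i,y_i)\}$ additively to $D$, so the slack can be re-allocated to integer multiples of $\gamma$ without inflating the total; still, one must verify that the resulting union bound has cardinality $\exp\!\bigl(\frac{D}{\gamma}\log\frac{\exp(n_t+D/\gamma-1)}{D/\gamma}\bigr)$ rather than something larger, and that the hard-margin step does not introduce an additional factor of $C_t$. A secondary, largely mechanical difficulty is aligning the numerical constants (the $4/3$ and $8$) with a specific form of weighted Bernstein, which I would handle by invoking a one-sided Bernstein bound whose variance is upper bounded by the raw second moment (not the centered variance), since $d_2$ directly controls the former.
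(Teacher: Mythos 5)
Your proposal takes essentially the same route as the paper: the paper also proceeds via ghost-sample symmetrization, a $\gamma/2$-cover of $\cF$ at $2n_t$ points, a Shawe-Taylor--Williamson-style handling of the weighted slack $D$ (formalized through the auxiliary function class $L^D(\cX)$ and its covering number, rather than through your direct compositions count, but yielding the identical $\frac{D}{\gamma}\log\frac{\exp(n_t+D/\gamma-1)}{D/\gamma}$ bound), and an importance-weighted Bernstein inequality whose second moment is controlled by $d_2(p\|p_t)$ and whose range is $C_t$, followed by a union bound. The reservation you flag about mixing weighted slack with the combinatorial enumeration resolves exactly as the paper's auxiliary-function construction handles it, since the covering number of $L^D(\cX)$ depends only on the total mass $D$, not on how the weights distribute it.
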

The proof is outlined in the supplement. \textbf{Remarks:} %

1. Theorem \ref{thm:soft_margin} states that the expected rate of overestimation is bounded by terms at most linear in $k_t$---the sum of the log covering number of $\cF$ \emph{as defined by the margin $\gamma$}, and the ratio of the violations on the training data to $\gamma$. The fact that the covering number is controlled by the margin parameter $\gamma$ shows that the complexity of this learning task relies on how certain we wish to be that the lower bound is not overestimated; more certainty requires a larger $\gamma$ which implies a smaller log covering number. This approach departs from previous literature which instead shows that the sample complexity of risk minimization relies on the covering number of a class containing the true function~\citep{alaa18}. In applications where it is sufficient to have reliable \emph{bounds} on the potential outcomes to make good decisions, this finding can be crucial---especially if the outcomes are difficult to estimate accurately using small samples. Note that the covering number can be bounded by the fat-shattering dimension at a scale proportional to $\gamma$. 

2. Both terms in $k_t$ decrease as $\gamma$ increases, which means that the risk of overestimation decreases as $\gamma$ increases. This property is important because it implies that we can control the risk of overestimation by requiring a large margin.  To see that, note that larger $\gamma$ shrinks the space of viable functions, which decreases the $\gamma$-covering number. The second term includes the ratio of the sum of violations on the training set, $D$, which decreases as $\gamma$ increases, to $\gamma$. Hence the second term also decreases as $\gamma$ increases.

Corollary~\ref{col:ite_fcr} in the supplement, builds on theorem~\ref{thm:soft_margin} to get a bound on the generalization error for bounds on the ITE.

\subsection{Generalization of reliable, informative estimators}
 Theorem~\ref{thm:soft_margin} establishes that the probability of overestimation decreases as we increase the margin $\gamma$. However, arbitrarily large values of $\gamma$ could result in excessively ``cautious'' estimates with low risk of overestimation, at the expense of being too loose to be useful in guiding decisions. In this work, we consider bounds to be informative or have high utility if they imply low uncertainty in the value of the true potential outcomes. We restrict ourselves to definitions of uncertainty that rely on the interval width (IW) of bounds $\boldsymbol{f} := (f_u, f_l)$
\begin{equation}\label{eq:iw}
    \text{IW}_{\bsf}(x) := f_u(x) - f_l(x)~.
\end{equation} 
Smaller IW$_{\bsf}(x)$ implies that bounds are tighter, which implies less uncertainty in the value of the potential outcomes. Intuitively, for $f_u$ and $f_l$ to give small IW$_{\bsf}$, they need to \emph{close} to each other. We define these ``close'' functions and the classes to which they belong as follows: 

\begin{thmdef}\label{def:lb_class}
    Let $p \geq 1$, and $\cX := \{ x: ||x|| \leq r \}$. 
    We say that two classes of bounded linear functionals $\cF_l, \cF_u$ are \emph{informative} if  $\cF_l \subseteq \{ \cX \ni x \mapsto \langle f_l,  x \rangle, ||f_l|| \leq A\}$ and  $\cF_u \subseteq  \{\cX \ni x \mapsto \langle f_u,  x \rangle,  \forall f_l \in \cF_l ; ||f_u - f_l|| < B, \forall x \in \cX : f_l(x) \leq f_u(x)\}.$
\end{thmdef}{}
In words, $\cF_l$ is the set of functions with norm $\leq A$, while $\cF_u$ is the set of functions that are close to functions in $\cF_l$, specifically, within $B$ distance from each $f_l \in \cF_l$. In addition, we specify that $f_l(x) \leq f_u(x)$ for every $x \in \cX$. 

The next theorem extends theorem~\ref{thm:soft_margin} to these informative function classes, allowing us to study the risk of overestimation for tight intervals. To improve readability, log terms which do not affect the interpretation of the statement have been suppressed. The full statement is presented in Theorem~\ref{lem:soft_margin_fat_supp}. 
\begin{thmthm}\label{thm:soft_margin_fat}
Let $\cF^t_l$, $\cF^t_u$, $A$, $B$, and $r$ be as defined in definition~\ref{def:lb_class}, $z$, and $D$ as defined in theorem~\ref{thm:soft_margin},and $C_t$ be as defined in expression~\eqref{eqn:cortes_c}. For $f^t_l \in \cF^t_l$, $f^t_u \in \cF^t_u$ and any $\gamma>0$, with a probability $1-\delta$ over the draw of random samples, 
the bound \eqref{eq:thm1} in Theorem~1 applies with, for $t\in \{0,1\}$,
\begin{align*}
   k_t  & \approx \bigg\lceil \bigg(\frac{r(A+B)}{\gamma}\bigg)^2  + \frac{D}{\gamma} \log \frac{e(n_t + \sfrac{D}{\gamma} -1)}{\sfrac{D}{\gamma}} \bigg\rceil~.
\end{align*}
\end{thmthm}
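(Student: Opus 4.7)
The plan is to reduce this theorem to Theorem~\ref{thm:soft_margin} by replacing the generic covering-number term $\log \cN(\sfrac{\gamma}{2}, \cF, 2n_t)$ that appears in $k_t$ with an explicit bound tailored to the informative class in Definition~\ref{def:lb_class}. Since Theorem~\ref{thm:soft_margin} already delivers the overall structure of \eqref{eq:thm1} for any sturdy class, the entire task is to show that the relevant log covering number is of order $(r(A+B)/\gamma)^2$ up to the polylog factors that the ``$\approx$'' in the statement absorbs.

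First I would control the norm of every function in the effective hypothesis class. By Definition~\ref{def:lb_class}, any $f_u \in \cF^t_u$ satisfies $\|f_u\| \leq \|f_u - f_l\| + \|f_l\| < B + A$ for some $f_l \in \cF^t_l$, while $\cF^t_l \subseteq \{f : \|f\| \leq A\}$ by construction. Hence $\cF^t_l \cup \cF^t_u$ is contained in the class $\cF_{A+B} := \{x \mapsto \langle w, x\rangle : \|w\| \leq A + B\}$ of linear functionals on $\cX = \{x : \|x\| \leq r\}$. Because the lower-bound risk $\underline{R}_{f^t_l}(\gamma)$ depends on $f^t_l$ only, the constraints $\|f_u - f_l\| < B$ and $f_l(x) \le f_u(x)$ can only restrict the realised hypotheses relative to $\cF_{A+B}$, so any upper bound on the $\gamma/2$-covering number of $\cF_{A+B}$ is an upper bound on the covering number of whichever subclass Theorem~\ref{thm:soft_margin} is instantiated on.

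Next I would invoke the classical margin-based fat-shattering bound for norm-constrained linear functionals on a ball: the $\gamma$-fat-shattering dimension of $\cF_{A+B}$ is at most $(r(A+B)/\gamma)^2$ (Shawe--Taylor--Bartlett--Williamson--Anthony style bound, referenced in the supplement's definition~\ref{def:fat}). I would then pass from fat-shattering dimension to covering number via the Alon--Ben-David--Cesa-Bianchi--Haussler style inequality $\log \cN(\sfrac{\gamma}{2}, \cF, 2n_t) \leq c\,\mathrm{fat}_{c'\gamma}(\cF) \log^2(n_t/\gamma)$, producing $\log \cN(\sfrac{\gamma}{2}, \cF_{A+B}, 2n_t) \lesssim (r(A+B)/\gamma)^2$ where ``$\lesssim$'' hides the $\log^2(n_t/\gamma)$ factor that the statement suppresses. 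Substituting this into the formula for $k_t$ from Theorem~\ref{thm:soft_margin} yields the claimed expression, after which the rest of the inequality \eqref{eq:thm1} carries over verbatim.

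The main obstacle I expect is making the ``$\approx$'' rigorous: the fat-shattering-to-covering-number conversion introduces $\log(n_t)$ and $\log(1/\gamma)$ factors, and one has to be careful that these are indeed the only terms dropped (as opposed to, say, dependencies on $C_t$ or on $D/\gamma$ which must remain explicit, since they already appear in $k_t$). A secondary subtlety is that the informative class couples $f^t_l$ and $f^t_u$; a clean way to handle this is to note that our target risk is a function of $f^t_l$ alone, so the coupling only shrinks the effective class and the uncoupled bound $(r(A+B)/\gamma)^2$ is conservative but sufficient. Beyond those bookkeeping points, the argument is a direct specialization of Theorem~\ref{thm:soft_margin} and should introduce no new probabilistic machinery.
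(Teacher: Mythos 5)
Your proposal is correct, but it takes a genuinely different route from the paper's. The paper's supplement derives a \emph{custom} fat-shattering bound for the informative class $\cF_l$ (Lemmas~A5--A7): it argues that when a set is $\gamma$-shattered by $\cF_l$, the coupling constraint $\|f_u - f_l\| < B$ forces the shattered points to be spread out even further than the standard norm-$A$ argument requires, and combining this with the usual $\sqrt{n}\,r$ upper bound yields $\mathrm{fat}(\gamma, \cF_l) \leq (r(A+B)/(2\gamma))^2$, which is \emph{sharper} than the naive $(rA/\gamma)^2$ precisely when $B < A$. You instead embed $\cF_l \cup \cF_u$ into the norm-$(A+B)$ ball and invoke the off-the-shelf fat-shattering bound for linear functionals, giving $(r(A+B)/\gamma)^2$. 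Both routes land at the same order and are absorbed by the theorem's ``$\approx$'', so your proof is sound; it is also shorter, self-contained, and sidesteps the most delicate step in Lemma~A5 (the claim that a suitable $f_u$ satisfies the $(f_u-f_l)$-shattering conditions with the same witness vector). The conceptual cost is that your bound is never better than the naive $(rA/\gamma)^2$---as you yourself observe, since the risk depends on $f_l$ alone, covering $\cF_A$ would already suffice and be strictly tighter. In your argument $A+B$ appears only because you deliberately embed in the larger ball to match the stated expression, whereas in the paper $A+B$ emerges from exploiting the coupling itself, which is what makes the theorem's interpretation---that shrinking the interval-width budget $B$ genuinely reduces the complexity of $\cF_l$---do real work rather than being a byproduct of a looser containment.
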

Theorem~\ref{thm:soft_margin_fat} gives us an idea of how to learn informative bounds that reliably cover the potential outcomes. It suggests that one way to reduce generalization error is to minimize  $A$, the norm of $f^t_l$, $B$ the  distance between $f^t_l$ and $f^t_u$, and $D$, the sum of violations on the training data.

%---------------------------------%
%------------Algorithm-------------%
%---------------------------------%

\section{Learning reliable, informative bounds}\label{sect:algorithm}
We present the Bounded Potential outcomes algorithm (BP) for learning informative bounds on potential outcomes under the constraint that they are violated with low probability. The algorithm is flexible in that it can maximize different utilities or notions of informativeness that the decision maker might have. For brevity, we focus on utility as defined by small IW. BP leverages our theoretical findings by explicitly constraining the violations on the training data, and minimizing some loss function, $\ell$, of the interval widths. 

The appropriate loss function will vary between applications. We consider optimizing three loss functions of IW over $p(x)$: $\ell^{(1)}$ represents the desire to achieve a tight prediction bound on average, captured in the mean absolute interval width. $\ell^{(2)}$ penalizes the mean squared interval width, placing a higher penalty on points with very wide bounds. The third $\ell^{(\infty)}$ minimizes the worst (widest) interval by penalizing the maximum interval width. 

We consider learning under the following conditions. Let $\phi : \cX \rightarrow \mathbb{R}$ be the feature map corresponding to a reproducing kernel $k(x_i, x_j) = \langle\phi(x_i) , \phi(x_j)\rangle$. For treatments $t\in \{0,1\}$ and bounds $b \in \{l,u\}$ (lower/upper), let $f^t_b(x_i) := \langle \theta_b^t , \phi(x_i)\rangle + \rho_b^t$. In this setting, all three losses ($\ell^{(1)}, \ell^{(2)}, \ell^{(\infty)}$) are convex in $\theta$. Let sample weights $w_{t_i}$ be defined as in Definition~\ref{def:pscore}, and define $\widetilde{w}_{t_i} := w_{t_i}/\sum_{j : t_j=t_i} w_{t_j}$. 
Finally, let $\Lambda(f)$ denote a term that measures  complexity of $f$, e.g.,  the squared norm of parameters.

We describe two versions of BP: BP-D, a decoupled version where the bounds for the treated and control groups are fitted separately, and BP-C, a coupled version where the two are fitted simultaneously.
\subsection{BP-D: decoupled treatment groups}\label{sect:decop_alg}
First, we consider estimating bounds $f_u, f_l$ on a single potential outcome $Y(t)$, independently of others. We minimize the weighted loss $ \ell^{(p)}_{\widetilde{w}} (\bsf)$ and desire for bounds to be violated only with small probability over $p(x)$. We let the loss $\ell^{(p)}_{\widetilde{w}}(\bsf)$ be defined by either the mean absolute interval width, $\ell^{(1)}_{\widetilde{w}}(\bsf) \sum_{i:t_i=t} \widetilde{w}_{t_i}|\text{IW}_{\bsf}(x_i)|$, the mean squared interval width, $\ell^{(2)}_{\widetilde{w}}(\bsf)= \sum_{i:t_i=t} \widetilde{w}_{t_i}(\text{IW}_{\bsf}(x_i))^2$, or the maximum interval width, $\ell^{(\infty)}_{\widetilde{w}}(\bsf) = \sup_{i:t_i=t}(\text{IW}_{\bsf}(x_i))$.

\begin{equation}\label{eqn:indep_obj_const}%
\begin{aligned}
& \underset{\bsf = \{f_u, f_l\} }{\text{minimize}}
& & \ell^{(p)}_{\widetilde{w}} (\bsf) + \alpha \Lambda(\bsf) \\
& \text{subject to}
& & \sum_{i:t_i=t} \widetilde{w}_{t_i} \max(y_i - f_u(x_i), 0) \leq  \beta_u  \\ 
& & & \sum_{i:t_i=t} \widetilde{w}_{t_i} \max(f_l(x_i) - y_i, 0) \leq  \beta_l \\ 
& & & f_l(x_i) \leq f_u(x_i)~, \forall i : t_i = t~.
\end{aligned}
\end{equation}

Note that the constraints are defined with respect to the magnitude of the violations, which does not immediately  translate into a specific FCR. We address this issue in section~\ref{sect:xval}. Problem \eqref{eqn:indep_obj_const} can be solved separately for the two treatment groups, as is done in two-learners or the treatment variable could be added in as a feature and the two treatment groups can be jointly trained, as is done in single-learners \citep{kunzel2019metalearners}. Next, we highlight some important characteristics of this estimator.\\
\\
\textbf{1. BP-D minimizes the lower bound in Theorem~\ref{thm:soft_margin_fat}.} Note that BP-D is specified over the set of linear functions with kernel defined feature spaces. With $\Lambda$ defined as the 2-norm of the vector $\theta$, and because of the last constraint ($f_l \leq f_u$), the functions returned by BP-D fall within the set of functions defined in definition~\ref{def:lb_class} with high probability, and hence theorem~\ref{thm:soft_margin_fat} is applicable here. Recall that theorem~\ref{thm:soft_margin_fat} states that for this estimated function to be optimal, they need to minimize $A=||\theta||, B = $ distance ($p-$norm) between the upper and the lower bounds and $D = $ the magnitude of the training set violations while maximizing $\gamma$. Problem~\eqref{eqn:indep_obj_const} directly minimizes the $A, B$ (for $p=1, 2, \infty$ depending on $\ell$) and $D$. As for $\gamma$: suppose we fix the bias to be $\tilde{\rho}^t_b$, then $\gamma^t_b =  \tilde{\rho}^t_b - \rho^t_b$, where the latter is the bias returned by solving problem~\eqref{eqn:indep_obj_const}. Because problem~\eqref{eqn:indep_obj_const} minimizes $\rho^t_b$, it maximizes $\gamma^t_b$ for a fixed $\tilde{\rho}^t_b$. Ideally, we would not fix $\tilde{\rho}^t_b$ in advance, but let it be decided by the data. We address this issue in section~\ref{sect:xval}. \\
\\
\textbf{2. BP-D with $\ell^{(1)}$-loss is equivalent to quantile regression.} When minimizing the mean absolute interval width, our problem reuces to a quantile regression with non-crossing constraints~\citep{takeuchi2006nonparametric} of quantiles $q$ and $1-q$ for for some choice of $q \in (0, .5)$.
\begin{thmthm}\label{thm:quantile}
Assume that \eqref{eqn:indep_obj_const} is strictly convex and has a strictly feasible solution. Then, for any fixed quantile $q \in (0.5, 1)$, there are parameters $\beta_u, \beta_l \geq 0$ such that the minimizers $f_u^*, f_l^*$ of \eqref{eqn:indep_obj_const} with absolute loss and the minimizers of the quantile loss for quantiles $(q, 1-q)$, with non-crossing constraints, are equal.
\end{thmthm}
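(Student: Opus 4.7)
The plan is to set up a one-to-one correspondence between the KKT systems of \eqref{eqn:indep_obj_const} (with $\ell^{(1)}$ loss) and non-crossing quantile regression at levels $(q, 1-q)$ via an appropriate choice of constraint budgets $\beta_u, \beta_l$. The key identity that makes this work is the following: under the non-crossing constraint $f_l(x_i) \le f_u(x_i)$, the interval width decomposes as
\[
f_u(x_i) - f_l(x_i) = \max(f_u(x_i)-y_i,0) - \max(y_i-f_u(x_i),0) + \max(y_i-f_l(x_i),0) - \max(f_l(x_i)-y_i,0),
\]
using $f_u - y = \max(f_u-y,0) - \max(y-f_u,0)$ and the analogous identity for $f_l$. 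Consequently, adding $\lambda \sum_i \widetilde w_{t_i}[\max(y_i - f_u,0) + \max(f_l - y_i,0)]$ to the $\ell^{(1)}$ objective and choosing $\lambda = 1/(1-q)$ makes the sum equal to $\frac{1}{1-q}$ times the non-crossing quantile regression objective $\sum_i \widetilde w_{t_i}[\rho_q(y_i-f_u) + \rho_{1-q}(y_i-f_l)]$, where $\rho_\tau(r) = \tau\max(r,0)+(1-\tau)\max(-r,0)$.

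The argument then proceeds as follows. First, strict convexity together with strict feasibility (Slater's condition) ensure that both problems have unique minimizers and that strong duality and KKT are necessary and sufficient for optimality. Second, let $(\hat f_u, \hat f_l)$ be the unique minimizer of the non-crossing quantile regression problem (with the same regularizer $\alpha\Lambda$) and read off the constraint levels from its primal values:
\[
\beta_u^* := \sum_{i:t_i=t}\widetilde w_{t_i}\max(y_i - \hat f_u(x_i),0), \qquad \beta_l^* := \sum_{i:t_i=t}\widetilde w_{t_i}\max(\hat f_l(x_i) - y_i,0),
\]
which are automatically nonnegative. Then $(\hat f_u, \hat f_l)$ is primal-feasible for \eqref{eqn:indep_obj_const} with these budgets, and both residual constraints are tight. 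Take $\lambda_u = \lambda_l = 1/(1-q) > 0$ as their associated multipliers, and retain the multipliers $\mu_i \ge 0$ from the non-crossing constraints in the quantile regression KKT system. By the identity above, the Lagrangian of \eqref{eqn:indep_obj_const} at $(\hat f_u, \hat f_l)$ with these multipliers equals $\frac{1}{1-q}$ times the quantile regression Lagrangian, plus a constant independent of $(f_u, f_l)$; stationarity therefore transfers directly, and primal feasibility, dual feasibility, and complementary slackness all hold by construction. Uniqueness of the BP-D minimizer then yields the claimed equality.

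The main technical obstacle will be making the stationarity transfer airtight in the presence of the non-smooth hinge terms. I would address this by working with the convex subdifferentials of $\max(\cdot,0)$, or equivalently by introducing nonnegative slack variables $\xi_i^+, \xi_i^-$ to reformulate both problems as smooth constrained programs in $(\theta, \rho, \xi)$ so that ordinary gradient-based KKT applies. Because the two Lagrangians differ only by a positive scalar multiplication and an additive constant in the variables $(f_u, f_l)$, their subdifferentials in those variables coincide after scaling and all KKT conditions match. No restriction on $q$ beyond $q \in (0.5, 1)$, which is exactly what guarantees $\lambda = 1/(1-q) > 0$, is needed for the construction.
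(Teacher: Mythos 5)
Your proposal is correct and follows essentially the same route as the paper's proof: Lagrangian duality via Slater's condition, fixing $\lambda_u = \lambda_l = 1/(1-q)$, and rearranging the resulting penalized objective into $\rho^{(q)}(y_i - f_u) + \rho^{(1-q)}(y_i - f_l)$ up to a positive constant. Your presentation is a bit tidier — you package the rearrangement as a single algebraic identity $a = \max(a,0) - \max(-a,0)$ rather than the paper's case-split, and you explicitly read off $\beta_u^*, \beta_l^*$ from the quantile-regression solution to close the KKT correspondence (a step the paper leaves implicit) — but the underlying argument is the same. One small nit: $q > 0.5$ is not what makes $1/(1-q)$ positive (any $q < 1$ does); it is what keeps the pair $(q, 1-q)$ on opposite sides of the median so that $f_u$ and $f_l$ play the roles of upper and lower bounds.
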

A proof is given in the appendix.

BP-D allows us to learn reliable and informative bounds but it does not make use of the ``unlabeled'' data from the opposite treatment group. This is addressed next. 
\vspace{-.5em}

%-------------------------------------------%
%------Coupled------------------------------%
%-------------------------------------------%
\subsection{BP-C: coupled treatment groups}\label{sect:main_alg}
In the coupled problem, we make use of samples from the counterfactual treatment group in two ways. First, we apply constraints that ensure that the lower and upper bounds do not cross also for counterfactual outcomes. Second, the loss functions are defined with respect to the full marginal distribution of subjects (including counterfactual treatment assignments). We define the coupled version of the mean absolute loss $\ell^{(1)} =  \sum_{i=1}^n \sum_{t=0}^1 \widetilde{w}_{t_i}|\iw_{\bsf^t}(x_i)|$, mean squared interval width, $\ell^{(2)} =  \sum_{i=1}^n \sum_{t=0}^1 \widetilde{w}_{t_i}\iw_{\bsf^t}(x_i)^2$, and maximum interval width,  $\ell^{(\infty)} = \sup_{i=1}^n \sum_{t=0}^1\iw_{\bsf^t}(x_i)$. The coupled problem becomes:
\begin{align}
& \underset{\{\bsf^t = \{f^t_u, f^t_l\}\} }{\text{minimize}}
& & \ell^{(p)}_{\widetilde{w}}(\bsf^0, \bsf^1) + \alpha \cdot (\Lambda(\bsf^0) + \Lambda(\bsf^1)) 
\nonumber \\
& \text{subject to}
& & \sum_{i:t_i=t} \widetilde{w}_{t_i} \max(y_i - f^t_u(x_i), 0) \leq  \beta_u,  \forall t \nonumber \\ 
& & & \sum_{i:t_i=t} \widetilde{w}_{t_i} \max(f^t_l(x_i) - y_i, 0) \leq \beta_l, \forall t \nonumber \\ 
& & & f_l^t(x_i) \leq f_u^t(x_i)~, \forall t, i :t_i = t~. \label{eqn:coupled_obj_const}
\end{align}

Given Assumption~\ref{assump:id}, specifically, the assumption of overlap this encourages the counterfactual outcome intervals to be small even if the corresponding treatment assignment is not observed. By coupling the two objectives, we allow information to be shared between the treated and non-treated populations in a semi-supervised way.  We caution, however, that in the absence of overlap, the coupled loss might be overly optimistic about in regions of non-overlap, returning intervals that do not cover the true data. With $f_l, f_u$ linear in the representation $\phi$ and $\Lambda(f)$ defined as the L2 norm of the function weights, expressions~\eqref{eqn:indep_obj_const} and~\eqref{eqn:coupled_obj_const} are both convex programs which can be readily solved by a general solver. Our code is available at \href{https://github.com/mymakar/bpo.git}{$<$\texttt{github.com/mymakar/bpo.git$>$}}.

\subsection{Cross-Validating BP}\label{sect:xval}
BP-C/D requires a regularization parameter, $\alpha$, a level of tolerance to violations, $\beta_{u,l}$, and $\sigma$, which controls the kernel (e.g., the length scale for Gaussian kernels or the polynomial degree for polynomial kernels). Suppose that we solve problem~\eqref{eqn:indep_obj_const} or~\eqref{eqn:coupled_obj_const} and get some estimate for the bias $\tilde{\rho}^t_b$, we specify an additional parameter $\gamma >0$, and take the final estimate $\rho^t_l := \tilde{\rho}^t_l - \gamma$ and $\rho^t_u := \tilde{\rho}^t_u + \gamma$. This allows us to set $\gamma$ based on the data rather than specify it apriori. 

BP constrains the magnitude of the violations rather than the FCR directly. This allows the algorithm to directly reflect the theory and makes the optimization problem easier. The disadvantage is that the magnitude of violations does not directly translate into a specific FCR. We address this issue by designing a cross-validation algorithm that picks the hyperparameters of the model to achieve a required FCR, $\nu$. The algorithm takes as an input the training data, $\nu$, $\ell$, the required loss to minimize, and $M$, the set of hyperparameters to consider. We then split the data into training and validation. For each set of parameters $m \in M$, we use the training set to solve problem~\eqref{eqn:indep_obj_const} or~\eqref{eqn:coupled_obj_const}. We estimate $\hat{\nu}_{m}$ and $\widehat{\ell}_{m}$, the FCR and loss corresponding to $m$ on the held-out set. We discard of all the hyperparameters with a corresponding $\hat{\nu}_{m} > \nu$, and define $M' = \{m: \hat{\nu}_{m} \leq \nu \}$. We set the optimal hyperparameters $m^* := \min_{m \in M'} \widehat{\ell}_{m}$. The procedure is summarized in Algorithm~\ref{alg:xval} in the supplement.

%---------------------------%
%---------Experiments-------%
%---------------------------%
\section{Experiments}

We compare our model to other interval estimation methods. First is classical confidence-interval based approaches. We use \textbf{XX-CCI} to refer to this approach, where XX will be replaced by the name of the base model (e.g., if it is a Gaussian Process, we use GP-CCI). While popular, confidence intervals are known to have poor coverage in finite samples \cite{sargent1992investigation,lei2018distribution}. Conformal intervals, the second interval estimation method we compare against, were introduced as an alternative with better finite sample coverage \cite{lei2018distribution}. Conformal intervals are estimated by splitting the training data into two parts. The first part is used to train the outcome model, where parameters are picked via the usual cross-validation techniques. We estimate the residuals on the second subset of the training data. If the required FCR is $q$, we take the $1-q^{th}$ quantile of the residuals to be a ``shifting'' parameter (akin to $\gamma$ in our setting). The conformal intervals for a test sample are taken to be the estimated outcome $\pm$ the shifting parameter. We use \textbf{XX-CI} to refer to this approach. Finally, we introduce $\gamma$-intervals, which we refer to as \textbf{XX-}$\boldsymbol{\gamma}$. Similar to conformal intervals, we split the data into two, fitting the best model on the first half and then picking the smallest shifting parameter $\gamma$ that achieves the required FCR on the second half. We use \textbf{BP-V-Lp} to refer to our models, where V refers to the D (decoupled) or C (coupled) version and Lp refers to the norm of the loss (1, 2, or $\infty$). Recall that the 1-norm is similar to quantile regressions (QR) (by theorem~\ref{thm:quantile}). 

We evaluate the performance of our models and the baselines on a held-out test set with respect to two criteria: the achieved FCR, as defined in equation~\eqref{eq:fcr} and the utility as measured by the mean IW and the max IW, as defined in equation~\eqref{eq:iw}.  Additional cross-validation details for our model and the baselines are included in the supplement. 

We analyze settings where we expect BP to outperform baselines. Most baselines make restrictive assumptions about the distributions of the residuals. When such assumptions break, the resulting intervals are no longer tight or do not correctly cover the outcomes. We briefly outline such assumptions: 
\begin{enumerate}[topsep=0pt,itemsep=-0.5ex,partopsep=1ex,parsep=1ex,leftmargin=0pt]
    \item \textbf{Symmetry}. This assumption states that in order to get a 5\% FCR, we need to ensure that the lower and upper bounds are violated by at most 2.5\% each. In some cases, the tightest bounds would be achieved by non-symmetrical bounds, e.g., the lower bound is violated by 1\% whereas the upper bound is violated by 4\%. Violations to the symmetry assumption occur, for example, when the model is misspecified, which leads to biased estimates. In that case, tight bounds should reflect the direction of bias: if the estimates are biased downwards (meaning lower than the true value), it is more important that the upper bounds are not violated, whereas violations to the lower bound are more permissible (since the estimate itself is a lower value than the true outcome). 
    \item \textbf{Well-behaved residual distribution}: This assumption states that the residuals concentrate around a single, central value. Such an assumption is also violated when there is model misspecification, or if the outcome noise is heteroskedastic.
\end{enumerate}
We stress that our approach does not make these assumptions. Our analysis will focus on setting where violations to the symmetry assumption might occur. Additional analysis in section~\ref{sect:ist_hsk} in the supplement shows the our approach is superior when the well-behavedness assumption is violated (in the presence of heteroskedasticity). In addition, section~\ref{sect:acic_1k} in the supplement includes shows that in settings where the two assumptions are unlikely to be violated, BP still outperforms other kernel-based methods. 

\subsection{IST data}\label{sect:ist}
\begin{figure*}
% Y1
  \begin{minipage}{.51\textwidth}
  \captionsetup[subfigure]{justification=centering}
    \begin{subfigure}{\columnwidth}
        \centering
        \caption{ Comparing different loss functions \label{fig:ist_y1}}
        \includegraphics[width=.75\textwidth]{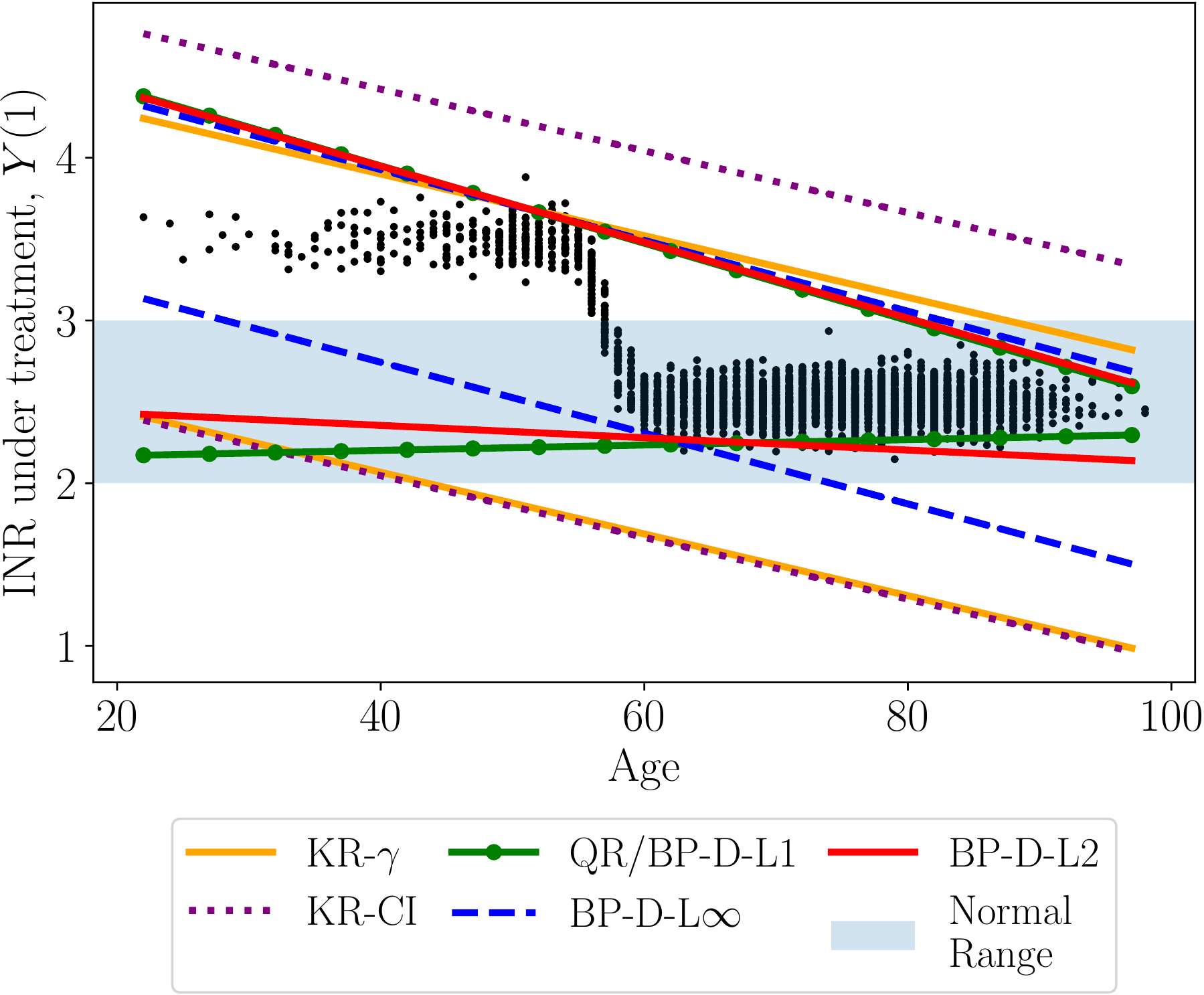}\vspace{-.5em}%
        %\\ against baselines  
    \end{subfigure}
  \end{minipage}
% Y0 
  \begin{minipage}{.49\textwidth}
  \captionsetup[subfigure]{justification=centering}
    \begin{subfigure}{\columnwidth}
        \centering
        \vspace{-1em}
        \caption{Decoupled and coupled versions\label{fig:ist_y0}}
        \includegraphics[width=.8\linewidth]{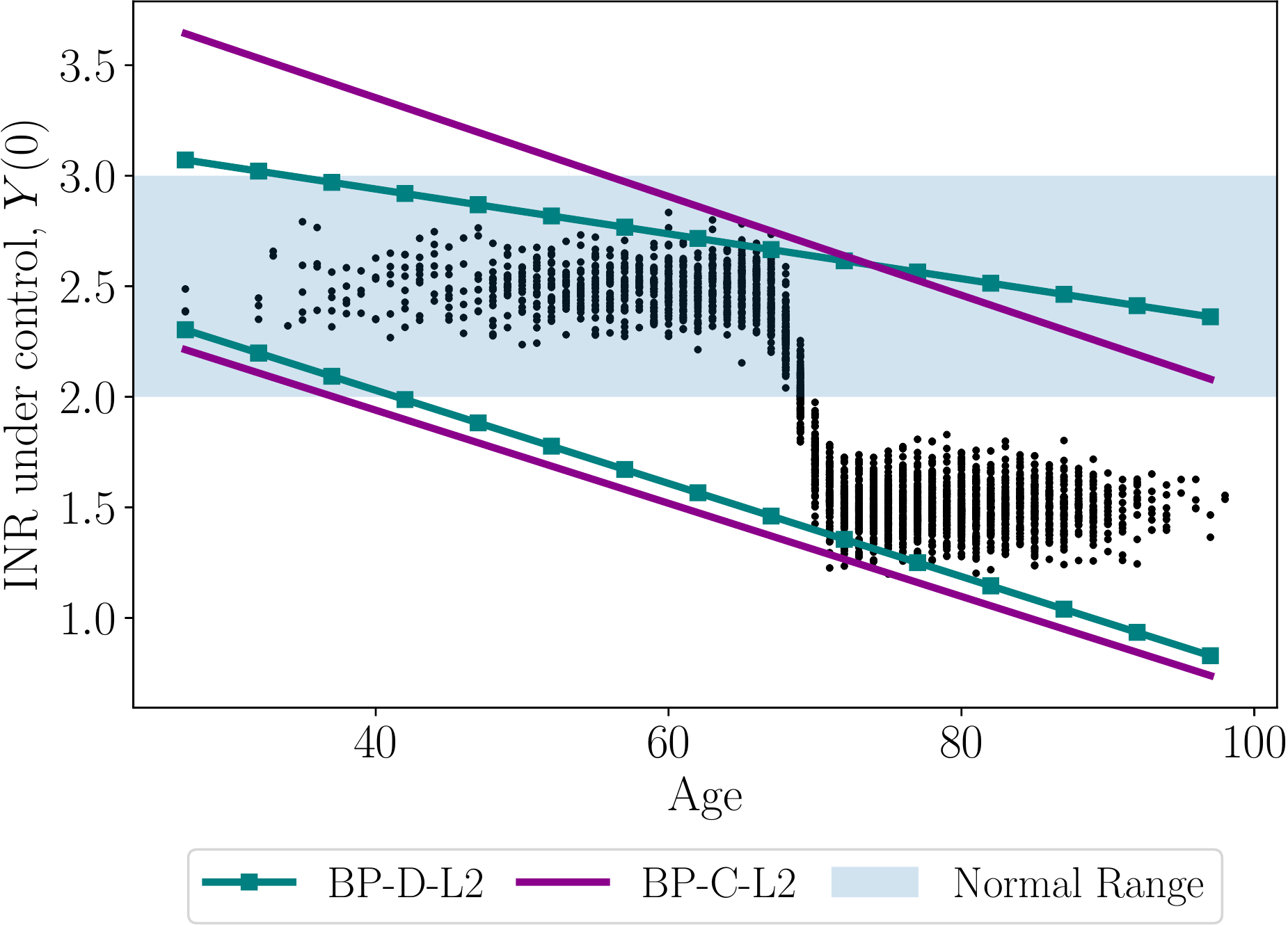}%

    \end{subfigure}
  \end{minipage}

  \caption{IST results. Plots show results from a single simulation. Black dots show potential outcomes on the test set, lines show fitted values, and shaded region shows healthy range. 
  Plot~\ref{fig:ist_y1} show that BP-D-L$\infty$ is a ``fair'' objective, ensuring that the younger ($\leq 60$) population has tight intervals, sacrificing tight intervals for older population. QR (equivalent to BP-D-L1) ensures intervals are tight for older population but returns wider intervals for the younger population. BP-D-L2 gives an estimate ``in-between'' the two objectives, penalizing large intervals more aggressively than QR/BP-D-L1. Baselines (KR-CI/KR-$\gamma$) return bounds that are loose for both populations. Plot~\ref{fig:ist_y0} shows that penalizing the counterfactual interval widths enables the coupled objective, BP-C-L2, to return a tighter fit for $Y(0)$ in the area where few untreated examples exist in the training data (age$>70$). 
\vspace{-1.2em} \label{fig:ist}}
\end{figure*}

\begin{table}
\centering
\caption{IST results. Table shows results averaged over 20 simulations, confirming conclusions from figure~\ref{fig:ist}. \label{tab:ist}}
\vspace{.1in} 
\resizebox{.8\columnwidth}{!}{%
\begin{tabular}{l|lll}
\toprule
Model &          FCR &       Mean IW &        Max IW \\
\midrule
\multicolumn{4}{c}{$Y(1)$ results}\\
\midrule
BP-D-L2         &  0.007 (0.36)  &  1.04 (0.05)  &  2.15 (0.19)  \\
BP-D-L$_\infty$ &  0.007 (0.37)  &  1.16 (0.06)  &  1.16 (0.06)  \\
QR/BP-D-L1         &  0.007 (0.43)  &  1.07 (0.09)  &  2.25 (0.26)  \\
KR$-\gamma$     &  0.004 (0.81)  &  1.96 (0.09)  &  1.96 (0.09)  \\
KR-CI           &   0.0 (0.0)  &  2.41 (0.07)  &  2.41 (0.07)  \\
\midrule
\multicolumn{4}{c}{$Y(0)$ results}\\
\midrule
BP-C-L2         &  0.007 (0.59)  &  1.35 (0.17)  &  1.62 (0.26)  \\
BP-D-L2         &  0.005 (0.51)  &  1.37 (0.13)  &   1.72 (0.2)  \\
\bottomrule
\end{tabular}}%
\vspace{-1em}
\end{table}

We begin with a simple illustrative example that highlights the strengths of BP vis-a-vis baselines and the properties of different utility functions in a practical setting. We aim to answer the following: (1) How do different losses reflecting different notions of utility affect the estimates? (2) How does the coupled objective make use of counterfactual data? 

We study the task of a physician deciding whether or not to prescribe Heparin, an anticoagulant, to reduce the risk of Ischemic and Hemorrhagic strokes. Patients with an elevated risk of forming blood clots can reduce their risk of an Ischemic stroke by taking Heparin. However, some patients experience excessive bleeding if placed on Heparin increasing their risk of a Hemorrhagic stroke. In this setting, to make an informed decision, the physician only needs to know if the INR under treatment \emph{roughly} falls within the healthy range of 2--3 as described in the introduction. The exact value of INR provides little additional insight. 

We use data from a randomized control trial measuring the effects of Heparin \citep{group1997international}. We restrict our analysis to the patients who received Heparin (treatment, $n_1 = 4530$) or no anticoagulant (control, $n_0 = 4534$).  To introduce confounding, we drop 70\% of the older (age $>70$), untreated population. Note that the distribution of age in the trial is skewed, with a mean of 71.8 and a skewness of -0.79, which means that young patients are under-represented.  Figure~\ref{fig:hist} in the supplement shows the distribution of ages for the treated and control groups in the training set. 
Because INR was not measured in the original data, we simulate the INR under treatment according to $\mathbb{E}[Y_i(1)\mid age_i] = S(-5, age'_i) + 2.5$, where $S(a, x)$ denotes the sigmoid function with coefficient $a$, and $age'$ is the age rescaled between -10, 10. This setup ensures that the majority of the population (older than 60) falls within the normal range if treated, while the few young patients younger than 60 have high INR if treated. Similarly, the outcome under control is determined by $\mathbb{E}[Y_i(0)\mid age_i] = S(-5,age'_i-4) + 1.5$. This reflects the setting where patients older than 70 (who are under-represented in the untreated population) would have too low of an INR if not placed on Heparin. Noise for both $Y(1)$ and $Y(0)$ is drawn from a Gaussian distribution with mean 0 and variance 0.1. 

We assume that the physician is restricted to linear models. In this setting the models are inherently misspecified, which means that the residuals violate the symmetry and well behaved-ness assumptions. We fit a kernel regression with a linear kernel (\textbf{KR}) for the baselines. We repeat our simulation 20 times and report averages. In each simulation, we randomly sample 3000 patients for training and validation and 3000 held out for testing. Following \citet{chernozhukov2016double}, we use half the training data to estimate the nuisance parameter, that is the propensity scores, and the other half to fit the potential outcomes. For propensity scores, we fit a logistic regression. We pick the regularization parameter for the propensity score model and all the response surface models via 3-fold cross-validation as described in detail in the supplement. For all experiments, we set the required FCR to be $\leq 0.01$, i.e., $ \leq 1\%$.

Table~\ref{tab:ist} (top) shows that BP-D-L$\infty$ achieves the smallest max IW. BP-D-L2 and QR (equivalent to BP-D-L1) achieve the smallest mean IW, with the former achieving a smaller max IW. Figure~\ref{fig:ist_y1} explains why. BP-D-L$\infty$ achieves the smallest max IW since it penalizes large intervals in the younger population while sacrificing by fitting a wider interval for age $\geq 60$. Such an objective is most appropriate when issues of fairness might be at play, such as if a physician wants to ensure that younger patients are never given abnormally large intervals compared to the older group. 
QR/ BP-D-L1 achieves a tight mean IW for the older population but sacrifices for the younger population. Such an objective is appropriate when we want estimates that are as tight as possible on average, even if that entails computing wide estimates for small subpopulations. BL-D-L2 is in between the two extremes of BP-D-L$\infty$ and BP-D-L1/QR; its mean IW is slightly higher than that of BP-D-L1 (for the younger population) and lower than that of BP-D-L$\infty$, its max IW is lower than that of BP-D-L1 but higher than that of BP-D-L$\infty$. This is because the L2 loss penalizes large IWs more aggressively than L1. Most notably, KR-CI and KR-$\gamma$ return loose estimates compared to BP/QR. This is because KR-CI  assumes symmetry of the residuals, returning overly loose upper bounds. KR-$\gamma$ implicitly assumes non-fat tailedness by shifting the estimates by the same constant for all individuals. More generally, the baselines fail because they aim to first estimate the outcome as best as possible, and then estimate the intervals post-training. Ultimately, the model is picked based on what reduces the mean squared error, not what reduces over/under-estimation. 

A physician who prescribes Heparin only when they are certain that a patient's INR would fall in the normal range (i.e., both upper and lower bounds fall in the normal range) would not prescribe heparin to anyone if they rely on KR-$\gamma$, KR-CI, or BP-D-L$\infty$ estimates. The latter has the advantage of providing tighter bounds for the younger patient group, whereas the former three also fails on that task.

Table~\ref{tab:ist} (bottom) shows that the decoupled version achieves a smaller mean and max IW compared to the coupled version, though the difference is not statistically significantly different. Figure~\ref{fig:ist_y0} gives insight into the difference between the two versions. The coupled objective returns tighter intervals for the majority of the population, that is patients with age~$>70$, who are under-represented in the control group. This happens because the coupled objective has an incentive to minimize the interval width for older, untreated patients since wider counterfactual interval for the old treated patients is penalized, whereas the decoupled objective is unaware of these patients. 

\subsection{ACIC data}\label{sect:acic}

Next, we evaluate our approach in a more challenging, high-dimensional task: semi-simulated data from the Atlantic Causal Inference Conference Competition \citep{dorie2017automated}. In this task, 58 variables were extracted from the Collaborative Perinatal Project, a study on pregnant women and their children. The treatment assignment and the response surfaces were simulated. We focus on the simulation with limited overlap and high heterogeneity where the treatment response surface is polynomial and the response surface is exponential. We sample 200 data points for the training/validation of the main models, and 1000 for our test set. We sample 1000 data point for training/validation of the propensity score models. Propensity scores are estimated using 3 fold cross-validation. 
\begin{figure}
% Y1
  \begin{minipage}{\columnwidth}
  \captionsetup[subfigure]{justification=centering}
    \begin{subfigure}{\columnwidth}
        \centering
        \caption{Comparing tightness of estimated intervals \label{fig:acic0}\vspace{-.5em}}
        \includegraphics[width=.8\columnwidth]{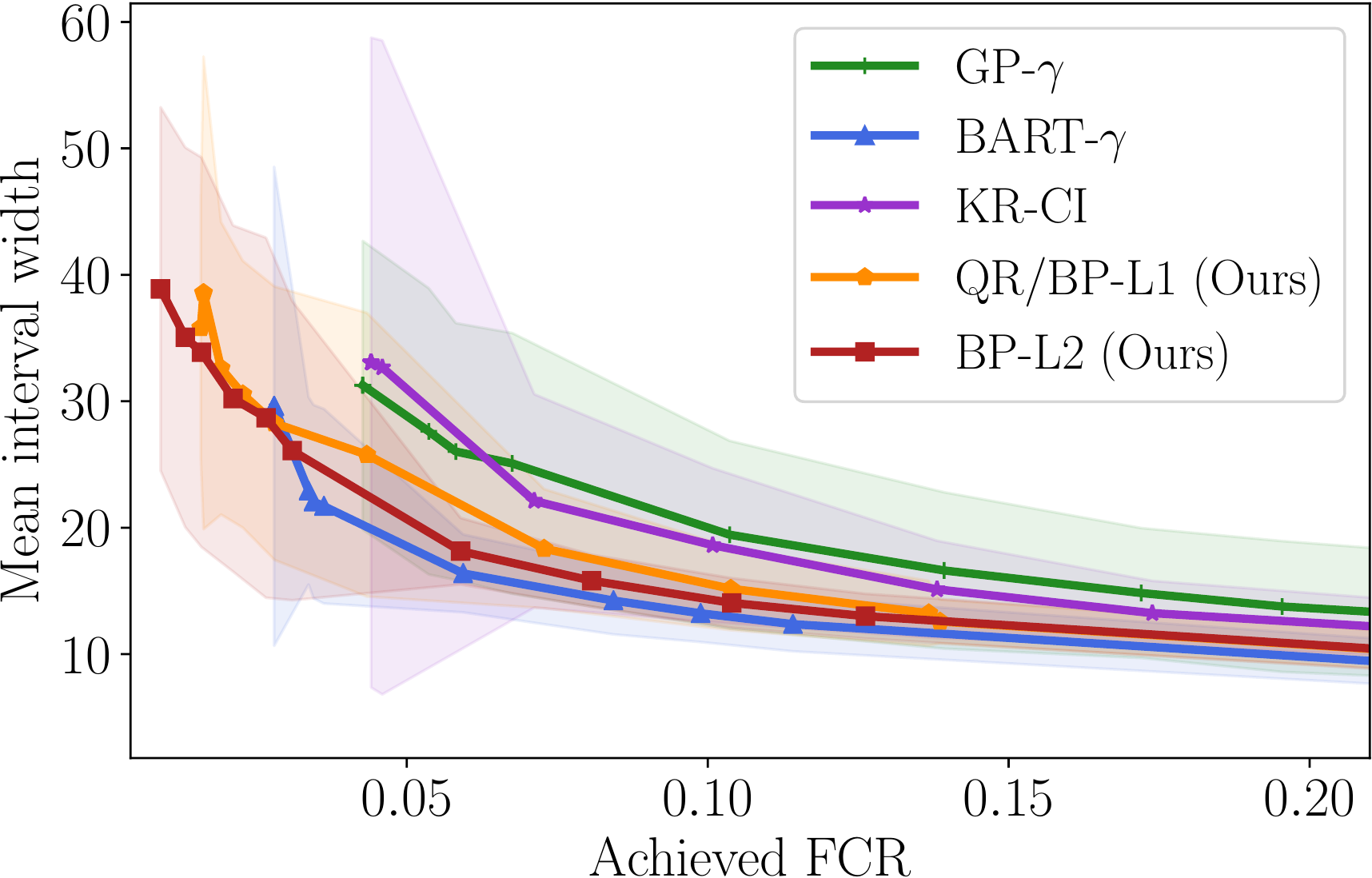}%
        %\\ against baselines  
    \end{subfigure}
  \end{minipage}
% Y0 
  \begin{minipage}{\columnwidth}
  \captionsetup[subfigure]{justification=centering}
    \begin{subfigure}{\columnwidth}
        \centering
        \caption{Comparing violation to the required FCR\label{fig:acic1}\vspace{-.5em}}
        \includegraphics[width=.8\columnwidth]{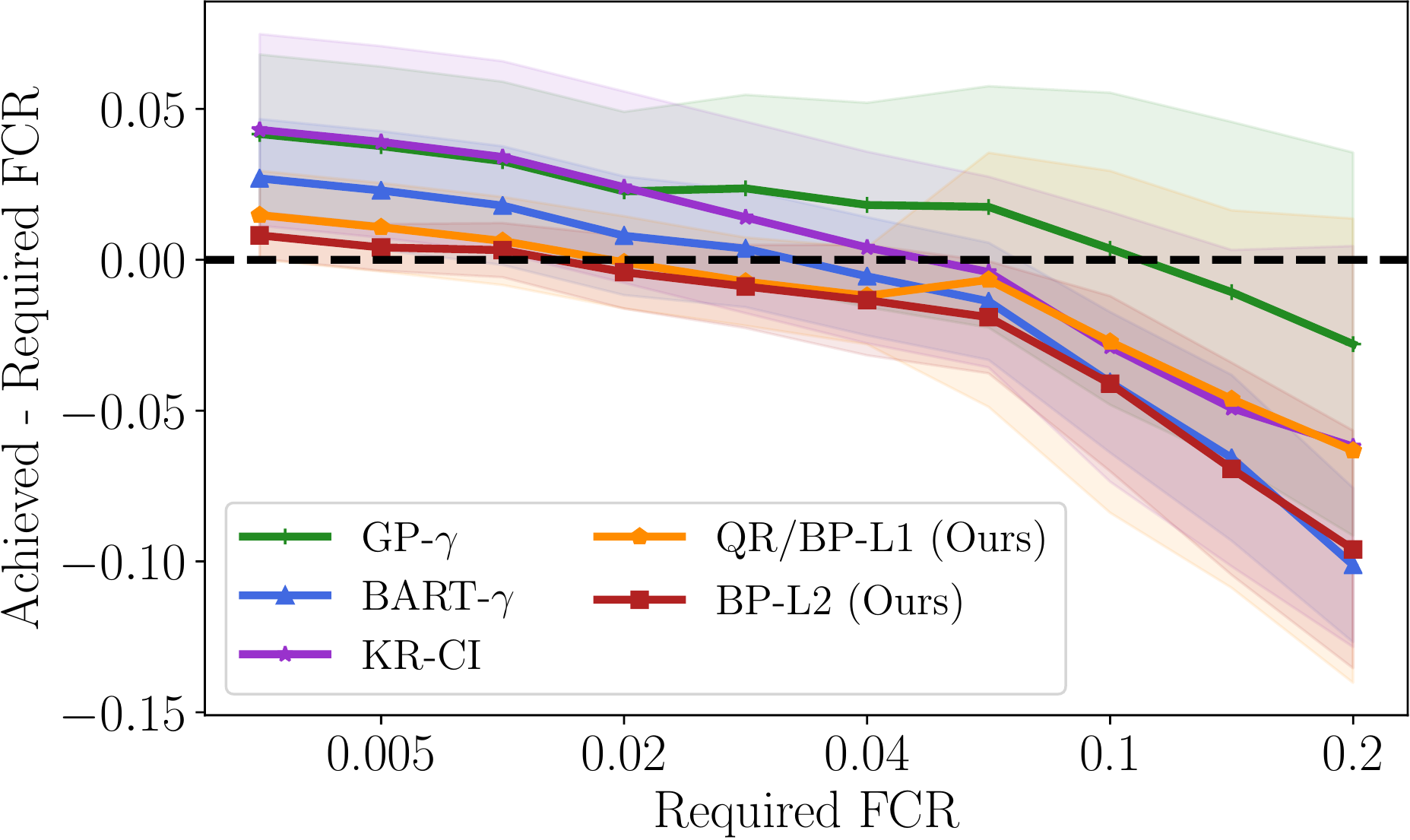}%
    \end{subfigure}
  \end{minipage}
  \caption{ACIC results. Plots show results averaged over 20 simulations. Plot~\ref{fig:acic0} shows the mean interval width for different values of the achieved FCR on a held-out test set. Plot~\ref{fig:acic1} shows the violation of the required FCR (= achieved - required) at different values of required FCR. Models above the dotted black line are in violation of the required FCR. The two plots show that BP achieves a mean interval width comparable to that of BART but at a lower violation of the required FCR. BP outperforms all kernel-based methods in terms of mean interval width and violation to the required FCR.  
\vspace{-1.2em} \label{fig:acic}}
\end{figure}

To fit the potential outcomes, we use an RBF kernel for our BP/QR models. We also use an RBF kernel for the kernel regression models. We only present KR-CI, excluding KR-$\gamma$ since it performs comparably to KR-CI. In addition, we include single-learners \cite{kunzel2019metalearners} with Gaussian processes as the base-estimators (GP), and Bayesian Additive Regression Trees (BART; \cite{hill2011bayesian}). For the latter 2 models, we compute the classical confidence intervals (GP-CCI, and BART-CCI), and a variant of the $\gamma$-intervals (GP-$\gamma$, and BART-$\gamma$). Here, $\gamma$ is used as a scaling rather than a shifting parameter; for an estimated outcome $\hat{y}$, and estimated standard deviation $\hat{\kappa}$, the lower/upper bounds are estimated as: $\hat{y} \pm \gamma \cdot \hat{\kappa}$, and the optimal $\gamma$ is picked based on cross-validation as described previously. 

We focus on getting the tightest bounds, so we only present results from BP-C-L2. We measure the performance of the models at required FCR = \{0.001, 0.005, 0.01,  0.02,  0.03,  0.04,  0.05,  0.1,   0.15, 0.2 \}. 

In this setting, since the small sample size potentially restricts the ability to fit the true functions, which may belong to a complex function class. This can be thought of as a ``forced'' model misspecification since the limited data does not afford us the ability to fit the true function, and limits us to simpler function classes. This is once again, a setting where we expect our models to outperform baselines that make strong assumptions about the residuals. 

Figure~\ref{fig:acic0} shows the mean achieved FCR on the $x-$ axis, and the mean IW on the $y-$axis for our model and baselines averaged over 20 simulations. First, we see that the mean IWs for all the models decrease as the achieved FCR increases. This confirms our theoretical findings that a trade-off between confidence that the bounds cover the potential outcomes and complexity of the function class; lower required FCRs (i.e., higher confidence that the bounds cover the true date) are associated with simpler function classes, which sacrifices accuracy, leading to higher mean IW. Second, we see that our models achieve interval widths that are tighter than all other kernel-based methods, and comparable to BART at every value of achieved FCR. However, figure~\ref{fig:acic1} shows that our models achieve smaller violation compared to BART. This implies that our models are better able to exploit the trade-off between confidence and complexity.

Results from GP-CCI, and BART-CCI are excluded from the plots, and presented in the section~\ref{sect:acic_cci} in the supplement since they achieve very large violations ($\approx$.6 for GP-CCI, and $\approx$.2 for BART-CCI for roughly all required FCRs). This conforms with previous studies that show that CCI methods tend to have poor coverage in finite samples \cite{sargent1992investigation,lei2018distribution}.

%---------------------------%
%---------Conclusion-------%
%---------------------------%
\section{Conclusion}
In this paper, we establish that the sample complexity of learning bounds on potential outcomes depends on how confident we wish to be that the bounds cover the true potential outcomes. For applications where it is sufficient to have reliable bounds on the potential outcomes to make good decisions, and the outcomes are complex functions, our findings indicate how to simplify the learning problem. Based on these findings, we introduced an algorithm that maximizes a notion of usefulness, specified by the decision maker, subject to  constraints that guarantee validity of the bounds with high probability. Using semi-synthetic data, we showed that our algorithm can guide physicians in making treatment decisions for stroke patients. We also showed that our method outperforms baselines, estimating tight prediction intervals without violating a required level of false coverage rate.

\section*{Acknowledgements}
We thank the anonymous reviewers, Uri Shalit, Alex D'Amour, and members of the Clinical and Applied Machine Learning group at MIT for insightful suggestions and feedback. We thank Lucas Wittman, Amal Ramsis and Samer Moussa for medical feedback. DS and FJ were supported in part by Office of Naval Research Award No. N00014-17-1-2791. FJ was also partially supported by the Wallenberg AI, Autonomous Systems and Software Program (WASP) funded by the Knut and Alice Wallenberg Foundation. MM was funded by Wistron Corp, Quanta Computers, and Microsoft. JG was funded by Wistron Corp, and Quanta Computers.

\bibliography{bpout}
\bibliographystyle{icml2020}

%---------------------------------------------------------%
%-----------Additional definitions------------------------%
%---------------------------------------------------------%
\section{Additional definitions}
The following definitions will be useful to prove our main statements. 

\begin{thmappdef} \label{def:sturdy}[Restated from \citet{st_williamson_99}] We say that a function class $\cF$ is sturdy if it maps $X$ of size $n$ to a compact subset of $\mathbb{R}^n$ for any $n \in \mathbb{N}$. 
\end{thmappdef}

\begin{thmappdef}\label{def:cover}
    Let $(X, l_\infty)$ be a pseudo-metric space defined with respect to the $l_\infty$ norm, and let $A$ be  a subset of $X$ and $\epsilon >0$. A set $U \subseteq X$ is an $\epsilon$-cover for $A$ if for every $a \in A$, there exists $u \in U$ such that $|| a- u||_{l_{\infty}} \leq \epsilon$. The $\epsilon$-covering number of $A$, $ \cN(\epsilon, A, d)$ is the minimal cardinality of the $\epsilon$-cover for $A$. 
\end{thmappdef}

\begin{thmappdef}\label{def:fat}[Restated from \cite{bartlett1999generalization}]
For $\gamma \in [0, \infty]$, and $\cF \in \mathbb{R}$, we say that a set of points $\{x_i\}_{i=1}^n$ is $\gamma-$shattered by $\cF$ if there exists $\{s_i\}_{i=1}^n \in \mathbb{R}$ such that for all binary vectors $\{\sigma_i\}_{i=1}^n$,  there is a function $f \in \cF$ satisfying: 
\begin{align*}
    f(x_i) \geq s_i + \gamma & \qquad if \ \sigma_i =1 \\
    f(x_i) \leq s_i - \gamma & \qquad otherwise
\end{align*}{}
\end{thmappdef}{}
The fat-shattering dimension can be thought of as a function from the positive reals to the set of positive integers which maps $\gamma$ to the largest $\gamma-$shattered set or $\infty$. 

We define the empirical proportion overestimated as: 
 \begin{thmappdef}
For $f \in \cF$, $\gamma >0$, a sample $z = \{ x_i, y_i \}_i^n$ drawn from a fixed but unknown distribution $p_t$, known weights $\boldsymbol{w}$, we define the empirical risk when the distribution with respect to $p$: 
 \begin{align*}
  \underline{\epsilon}^{\boldsymbol{w}}_{f}(z, \gamma) & = \sum_i  w(x) \indic \{  \underline{r}_{f}(x,y) < \gamma \}.
\end{align*}
\end{thmappdef}

%---------------------------------------------------------%
%-----------SECTION: THM 1 PROOF--------------------------%
%---------------------------------------------------------%
\section{Proof of theorem~\ref{thm:soft_margin}}

To construct the proof, we will first study the overestimation risk when there are no training set violations (Lemma~\ref{lem:hard_margin}). To extend our results to cases where there are training set violations, we rely on a technique, presented in \cite{shawe2002generalisation} and used in \cite{scholkopf01}, which allows us to ignore small violations in the training data at the cost of a more complex function space. This function space (formally defined in definition \ref{def:aux_func}) is constructed by creating an ``auxiliary function'' that picks specific points to have a non-zero violation. Its complexity depends on the allowable violations.
By augmenting the result from lemma~\ref{lem:hard_margin} with the auxiliary function space, we get theorem~\ref{thm:soft_margin_general}, a general version of theorem~\ref{thm:soft_margin}, which gives a bound on the overestimation risk for general sturdy function spaces. Finally, we give the proof for linear function spaces, which is presented in theorem~\ref{thm:soft_margin} in the main text. 

To build up to lemma~\ref{lem:hard_margin}, we restate the following two previously established results.

\begin{thmapplem}\label{lem:inf_gamma}
Due to \citet{st_williamson_99}: Let $\cF$ be a sturdy function class, then for each $N \in \mathbb{N}^+$ and any fixed sequence $X \in \cX^n$ the infimum $$
\inf \{\gamma : \cN(\gamma, \cF, X) < N \}
$$ 
is attained
\end{thmapplem}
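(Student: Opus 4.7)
The plan is to leverage the compactness of $\cF(X) := \{(f(x_1),\ldots,f(x_n)) : f\in\cF\}\subseteq\R^n$, which follows from the sturdiness hypothesis, to pass from a sequence of covers of decreasing scale to a single limiting cover whose scale equals the claimed infimum.

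First I would set $\gamma^* := \inf\{\gamma : \cN(\gamma,\cF,X) < N\}$ and assume the set is nonempty (otherwise the statement is vacuous). By definition of infimum, there is a sequence $\gamma_k \downarrow \gamma^*$ with $\cN(\gamma_k,\cF,X) < N$ for every $k$. For each $k$, choose a $\gamma_k$-cover $U_k \subseteq \R^n$ of $\cF(X)$ with $|U_k| = M_k < N$. Since $\{M_k\}$ takes values in $\{1,\ldots,N-1\}$, a pigeonhole argument lets me extract a subsequence on which $M_k = M$ is constant, and I may label the centers as $u_1^{(k)}, \ldots, u_M^{(k)}$.

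Next I would bound the cover centers uniformly in $k$. Any $u \in U_k$ at $\ell_\infty$-distance greater than $\gamma_k$ from every point of $\cF(X)$ can be discarded without affecting the covering property, so WLOG each remaining center lies within $\gamma_1$ of the bounded set $\cF(X)$. All centers then lie in a common bounded subset of $\R^n$, so by Bolzano--Weierstrass I can pass to a further subsequence along which each $u_j^{(k)}$ (for $j=1,\ldots,M$) converges to some $u_j^* \in \R^n$. To check that $\{u_1^*,\ldots,u_M^*\}$ is a $\gamma^*$-cover, fix $f \in \cF$, set $v := (f(x_1),\ldots,f(x_n))$, and for each $k$ select $j_k \in \{1,\ldots,M\}$ with $\|v - u_{j_k}^{(k)}\|_\infty \leq \gamma_k$. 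A further pigeonhole (per $f$) makes $j_k$ eventually equal to some $j^*$; passing to the limit along this sub-subsequence yields $\|v - u_{j^*}^*\|_\infty \leq \gamma^*$. Hence $\cN(\gamma^*,\cF,X) \leq M < N$, and the infimum is attained at $\gamma^*$.

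The hard part is executing the diagonal argument in the correct order: the Bolzano--Weierstrass extraction, which fixes the $M$ limiting centers $u_j^*$, must be done \emph{once and for all} before matching each $f$ to its covering index $j^*$, because otherwise the ``limit cover'' could differ from function to function and would not constitute a single cover of $\cF(X)$. A secondary subtlety is the a priori bounding of centers required to invoke Bolzano--Weierstrass; this is handled by pruning redundant centers, relying on the fact that sturdiness makes $\cF(X)$ bounded.
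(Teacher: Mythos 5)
The paper does not actually prove Lemma~A1; it restates the result and cites \citet{st_williamson_99}, so there is no in-paper proof to compare against. Your self-contained compactness/diagonal-extraction argument is the natural proof and is essentially correct: sturdiness makes $\cF(X)$ compact, hence bounded; you bound the cover centers, apply Bolzano--Weierstrass to get limiting centers $u_1^*,\ldots,u_M^*$, and then a per-$f$ pigeonhole plus continuity of $\|\cdot\|_\infty$ (together with the non-strict $\leq$ in Definition~A2) yields $\cN(\gamma^*,\cF,X)\leq M<N$, i.e., attainment.

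One small ordering issue to repair: you first fix a subsequence on which $|U_k|=M$ is constant and label the centers $u_1^{(k)},\ldots,u_M^{(k)}$, and only afterwards discard centers far from $\cF(X)$. Discarding can reduce the cardinality below $M$ and break the fixed labeling that the Bolzano--Weierstrass step relies on. Two easy fixes: (i) work with \emph{minimal} covers from the outset --- minimality forces every center of $U_k$ to lie within $\gamma_k$ of $\cF(X)$, since a center serving no point of $\cF(X)$ could be removed, contradicting minimality; or (ii) rather than deleting a redundant center, overwrite it with any fixed point of $\cF(X)$, which keeps the cardinality at $M$ and preserves the covering property. With either fix the rest of your argument (Bolzano--Weierstrass once and for all, then the per-$f$ pigeonhole) goes through as written.
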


We assume that $f^1_l$, $f^0_l$, $f^0_l$ and $f^0_u$ belong to a sturdy function class, as defined in definition~\ref{def:sturdy}. 

The following lemma due to \citet{cortes} bounds the second moment of the weighted loss. 
\begin{thmapplem}\label{lem:cortes}
Due to \citet{cortes}. For $x \in \cX$, a weighting function $w_t$ on $\cX$, a loss function $\ell$, and some function $f \in \cF$, the second moment of the importance weighted loss can be bounded as follows:
\begin{align*}
    \mathbb{E}_{X \mid T}\left[ w_t^2(X) \ell_f^2(X) \mid T=t \right] \leq d_2(p||p_t)
\end{align*}
\end{thmapplem}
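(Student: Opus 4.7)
\begin{thmproofsketch}[Proof plan]
The plan is essentially a one-line change-of-measure computation, with the resulting integral identified as the definition of the order-$2$ R\'enyi divergence. Throughout I would work under the boundedness of the loss that is used elsewhere in the paper: for the overestimation indicator $\ell_f(x,y) = \indic\{\underline{r}_f(x,y) < \gamma\}$ we have $\ell_f^2 \le 1$ pointwise, so the task reduces to bounding the unweighted second moment $\mathbb{E}_{X\mid T=t}[w_t^2(X)]$.

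First I would apply Bayes' rule and the definition of the importance weight (Definition~\ref{def:pscore}). Writing $e(x,t) = p(T=t\mid X=x)$ and $p(x\mid T=t) = e(x,t)\, p(x)/p(T=t)$, the weight collapses to the density ratio
\[
w_t(x) \;=\; \frac{p(T=t)}{e(x,t)} \;=\; \frac{p(x)}{p_t(x)}.
\]
Substituting this identity into the expectation and canceling one factor of $p_t$ gives
\[
\mathbb{E}_{X\mid T=t}\!\left[w_t^2(X)\,\ell_f^2(X)\right] \;\le\; \mathbb{E}_{X\mid T=t}\!\left[w_t^2(X)\right] \;=\; \int \frac{p(x)^2}{p_t(x)^2}\, p_t(x)\, dx \;=\; \int \frac{p(x)^2}{p_t(x)}\, dx.
\]

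Second, I would recognize the final integral as $2^{D_2(p\|p_t)} = d_2(p\|p_t)$ by the definition of the order-$2$ R\'enyi divergence, closing the argument. The only subtlety, which I would state explicitly at the outset to avoid ambiguity, is the boundedness convention on $\ell$: if the loss is not restricted to $[0,1]$ (as it is in all uses of this lemma in the paper), the bound should be read as $\sup_x \ell_f^2(x)\cdot d_2(p\|p_t)$. No concentration, symmetrization, or covering-number arguments are needed, so there is no real ``main obstacle'' beyond bookkeeping---the whole content of the lemma is a change of measure combined with the definitional form of R\'enyi-$2$.
\end{thmproofsketch}
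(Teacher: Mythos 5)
Your argument is correct and is, up to notation, exactly the proof given in the cited reference \citet{cortes}: bound $\ell_f^2\le 1$ for the $\{0,1\}$-valued loss, use Bayes' rule to rewrite $w_t=p/p_t$, and recognize $\int p^2/p_t$ as $d_2(p\|p_t)$. The paper itself defers to that reference without reproducing the computation, and your added caveat that a general bounded loss would contribute a factor $\sup\ell_f^2$ is the right thing to flag, since the lemma's statement of ``a loss function $\ell$'' is slightly looser than what the bound actually requires.
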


We now study the overestimation error when there are no training set violations, i.e., when $D=0$. A direct analogy can be drawn between the following lemma (lemma~\ref{lem:hard_margin}) and hard margin one-class SVMs studied in \citet{scholkopf01}, whereas theorem~\ref{thm:soft_margin} is analogous to the soft margin case. 

%-----------Hard MARGIN LEMMA-----------------------%
\begin{thmapplem}\label{lem:hard_margin}
Let $\cF$ be the class of linear functions in a kernel defined feature space, $ z = \{ x_i, y_i\}_{i: t_i = t}$, where $x_i, y_i \sim p_t(X, Y)$, and $C_t$ be as defined in~\eqref{eqn:cortes_c}. For $f^t_l \in \cF$, and any $\gamma>0$, let the associated $\underline{D}^{\boldsymbol{w}_t}(z, f_t^1, \gamma ) = 0$. With a probability $ 1-\delta$ over the draw of random samples, we have that: 
\begin{equation}
\begin{aligned}
  \underline{R}_{f^l_t}(\gamma) & \leq \frac{4 C_t (k_t + \log\frac{1}{\delta} )}{3n_t} + \sqrt{\frac{8 d_2(p||p_t) ( k_t + \log\frac{1}{\delta})}{n_t}}.
 \end{aligned}
\end{equation}
where, for $t\in \{0,1\}$,
\begin{align*}
   k_t  & = \bigg\lceil \log \cN(\gamma, \cF, 2n_t) \bigg\rceil~.
\end{align*}
\end{thmapplem}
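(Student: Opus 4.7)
The plan is to prove Lemma A2 by combining three classical ingredients: a Bernstein-type concentration for the importance-weighted loss at a fixed $f$, the Cortes-style second-moment bound of Lemma A3, and a double-sample / covering argument that lifts the single-$f$ bound to a uniform statement over $\cF$.

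First, fix $f \in \cF$ and set $Z_i(f) := w_t(x_i)\,\indic\{\underline{r}_f(x_i, y_i) < \gamma\}$ for $(x_i, y_i)$ drawn i.i.d.\ from $p_t$. Three properties make $Z_i(f)$ tractable: (a) by \eqref{eqn:cortes_c}, $0 \leq Z_i(f) \leq C_t$ almost surely; (b) by Assumption~\ref{assump:id}, $\E[Z_i(f)] = \underline{R}^{w_t}_f(\gamma) = \underline{R}_f(\gamma)$; and (c) by Lemma A3, $\E[Z_i(f)^2] \leq d_2(p\|p_t)$. The one-sided Bernstein inequality then yields, for this \emph{single} $f$ and any $\delta \in (0,1)$, with probability at least $1-\delta$,
\[
\underline{R}_f(\gamma) - \frac{1}{n_t}\sum_{i:t_i=t}Z_i(f) \;\leq\; \frac{2 C_t \log(1/\delta)}{3 n_t} + \sqrt{\frac{2\, d_2(p\|p_t)\,\log(1/\delta)}{n_t}}.
\]

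Second, to upgrade this to a uniform statement over $\cF$, I would use the standard double-sample (ghost sample) argument. Introducing an independent ghost copy of size $n_t$ and conditioning on the union of the two halves, the function class is replaced on the enlarged sample by a minimal $\gamma$-cover $\widehat{\cF}$ of cardinality $\cN(\gamma, \cF, 2n_t)$; the existence of a cover attaining this infimum is exactly what Lemma A1 guarantees for sturdy classes. A union bound over $\widehat{\cF}$ replaces $\log(1/\delta)$ by $k_t + \log(1/\delta)$, and passing between the double and single sample inflates the deviation constants, giving the $4/3$ and $\sqrt 8$ appearing in the statement. The margin-preservation step is routine: if $\hat f \in \widehat{\cF}$ is within $\gamma$ of $f$ in the $\ell_\infty$ pseudo-metric on the enlarged sample, a test-time violation of $f$ at margin $\gamma$ translates to a violation of $\hat f$ at the matching scale, and training non-violation at margin $\gamma$ for $f$ is preserved for $\hat f$.

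Third, I invoke the hypothesis $\underline{D}^{\boldsymbol{w}_t}(z, f^t_l, \gamma) = 0$: every training point satisfies $\underline{r}_{f^t_l}(x_i, y_i) \geq \gamma$, so $\sum_{i:t_i=t} Z_i(f^t_l) = 0$, and the same zero empirical term holds for its representative in the cover by the margin transfer just described. Substituting zero for the empirical average in the uniform Bernstein bound from the previous paragraph yields exactly the claimed inequality.

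I expect the second step to be the main obstacle: the symmetrization bookkeeping and the alignment of constants — so that the cover enters at scale $\gamma$ (rather than $\gamma/2$) as stated, and so that Bernstein's $2/3$ and $\sqrt 2$ inflate to exactly $4/3$ and $\sqrt 8$ after the ghost-sample step — requires careful attention. Steps one and three, by contrast, are direct applications of Lemmas A3 and the zero-training-violation hypothesis, respectively.
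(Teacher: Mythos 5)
Your proposal follows essentially the same route as the paper's proof: a double-sample (ghost) reduction, passage to a finite $\gamma$-cover whose existence is supplied by Lemma~\ref{lem:inf_gamma}, a Bernstein-type deviation bound on the symmetrized, importance-weighted indicator with the second moment controlled via Lemma~\ref{lem:cortes}, a union bound of size $\cN(\gamma,\cF,2n_t)$, and finally the observation that $\underline{D}^{\boldsymbol{w}_t}(z,f^t_l,\gamma)=0$ forces the empirical term to vanish. The only cosmetic difference is order of operations (you invoke the zero-violation hypothesis at the end, the paper uses it at the outset to drop the empirical term), and your flagged concern about tracking constants through symmetrization is exactly where the paper's own write-up is terse.
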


\begin{thmproof}
For a given $f^1_l \in \cF$:
\begin{align*}
     P \Big(  \underline{R}_{f^1_l}(\gamma) - \underline{\epsilon}_{f^1_l}^{\boldsymbol{w}}(z, \gamma) > \varepsilon \Big)  & = P \Big(  \underline{R}_{f^1_l}(\gamma)  > \varepsilon \Big) \\
    & \leq  2 P \Big(  \underline{\epsilon}_{f^1_l}^{\boldsymbol{w}'}(z', \gamma)  > \frac{\varepsilon}{2} \Big), 
\end{align*}
where the equality follows from the fact that the empirical error on the estimation data will always be 0 by definition of $\gamma$. And the inequality follows from applying the double (ghost) sample trick. Suppose that such an $f^1_l$ exists. Pick a fixed $k$ such that
\begin{align*}
    \gamma_k = \inf \{\gamma: \cN(\gamma, \cF, 2n_1) \leq 2^k \} \leq \gamma~.
\end{align*}
By Lemma \ref{lem:inf_gamma}, and assumption of sturdiness, we have that this $\gamma_k$ exists. Consider the $\gamma_k$-covering, $U$.
There exists another $f_{\bdt} \in U$ such that the distance between $f^1_l$ and $f_{\bdt}$ is $\leq \gamma_k \leq \gamma$, meaning $f_{\bdt}$ satisfies:
\begin{align*}
    P \Big(  \underline{\epsilon}_{f^1_l}^{\boldsymbol{w}'}(z', \gamma) > \frac{\varepsilon}{2} \Big) = P \Big(  \underline{\epsilon}_{f_{\bdt}}^{\boldsymbol{w}'}(z', 0) > \frac{\varepsilon}{2} \Big)
\end{align*}
This limits the complexity of the function class from infinite to having a covering number $=\cC_\cF^\gamma$.  Swapping samples between the estimation and the ghost sample, this will create a random variable $S' = \frac{1}{M}( \underline{\epsilon}_{f_{\bdt}}^{\boldsymbol{w}'_1}(z'_1, 0) + \ldots + \underline{\epsilon}_{f_{\bdt}}^{\boldsymbol{w}'_m}(z'_m, 0), + \ldots + \underline{\epsilon}_{f_{\bdt}}^{\boldsymbol{w}'_M}(z'_M, 0))$ for $M=2^{n_1}$, where the subscripts of $\boldsymbol{w}'$ and $z'$ denote the sample index. Note that $\E_{x\sim p_t}[S'] = \underline{R}_{f_{\bdt}}(0)$ and let $S$ denote $S' -\E_{x\sim p_t}[S']$, with $\E_{x\sim p_t}[S] =0$. Let $\sigma^2(S) = \E[S^2] = \E[(S' -\E_{x\sim p_t}[S'])^2]$. By Lemma~\ref{lem:cortes}, we have that $\sigma^2(S') \leq d_2(p || p_1) - \underline{R}_{f_{\bdt}}(0)^2$. 
By Bernstein's inequality: 
\begin{align*}
    P \Big( \underline{R}_{f_{\bdt}}(0) -  \underline{\epsilon}_{f_{\bdt}}^{w'}(z', 0)  > \frac{\varepsilon}{2} \Big) \leq \exp \Big(\frac{-3n_1 \varepsilon^2}{24\sigma^2(S) + 4C_1 \varepsilon}\Big), 
\end{align*}
and a union bound over the function space:
\begin{align*}
 & P \Big( \underline{R}_{f_{\bdt}}(0) -  \underline{\epsilon}_{f_{\bdt}}^{w'}(z',0)  > \frac{\varepsilon}{2} \Big) \leq\\
 & \cN(\gamma, \cF, 2n_1)  \exp \Big(\frac{-3n_1 \varepsilon^2}{24\sigma^2(S) + 4C_1 \varepsilon}\Big)
\end{align*}
Putting it all together: 
\begin{align*}
        & P \Big(  \underline{R}_{f^1_l}(\gamma) - \underline{\epsilon}_{f^1_l}^{w}(z, \gamma) > \varepsilon \Big) \\
        & \leq 2 P \Big( \underline{R}_{f_{\bdt}}(0) -  \underline{\epsilon}_{f_{\bdt}}^{w'}(z', 0)  > \frac{\varepsilon}{2} \Big)\\
        & \leq 2 \cN(\gamma, \cF, 2n_1)  \exp \Big(\frac{-3n_1 \varepsilon^2}{24\sigma^2(S) + 4C_1 \varepsilon}\Big)
\end{align*}
Setting $\delta(\epsilon)$ to match the upper bound, inverting w.r.t. $\epsilon$ and removing the (negative) term $\underline{R}_{f_{\bdt}}(0)^2$ from the right-hand side, we get that stated bound with probability $1-\delta$.
\qed
\end{thmproof}

%------------Auxilliary function space---------------%
Next, we define the auxiliary function space, which will allow us to study non-zero training set violations. 
\begin{thmappdef}\label{def:aux_func}
[Restated from \cite{scholkopf01}, definition 13] Let $L(\cX)$ be the set of real valued, non-negative functions $f$ on $\cX$ with support supp$(f)$ countable, that is the functions in in $L(\cX)$ are non-zero for at moust countably many points. We define the inner product of two functions $f, g \in L(\cX)$ by:
\begin{align*}
    f \cdot g \sum_{x \in \text{supp}(f)} f(x) g(x).
\end{align*}{}
The 1-norm on $L(\cX)$ is defined by $||f||_1 = \sum_{x \in \text{supp}(f)} f(x)$. Let $L^D(\cX):= \{ f \in L(\cX): ||f||_1 \leq D \}$. Define a transformation, or embedding of $\cX$ into the product space $\cX \times L(\cX)$ as follows: 
\begin{align*}
    \varpi & : \cX \rightarrow \cX \times L(\cX)\\
    \varpi & : x \rightarrow (x, \Delta_x), 
\end{align*}{}
where 
\begin{align*}
    \Delta_x = 
    \begin{cases}
       1,&  y = x,\\
       0,& \text{otherwise}
    \end{cases} \\
\end{align*}{}
For a function $f \in \cF$ a set of training examples $z$ of size $n$, define the function $g_f \in L(\cX)$ 
\begin{align*}
     g_f (\mathbf{y}):= \sum_{x,y \in z}  w_1(x) \min\{0, \gamma - \underline{r}_{f^1_l}(x,y) \} \Delta_x(\mathbf{y}), 
\end{align*}{}
where $\mathbf{y} = \{y_i\}_{i=1}^n$
\end{thmappdef}{}

%----------General version of hard margin thm--------%
We can now state the risk of overestimation for general sturdy functions. 
\begin{thmappthm}\label{thm:soft_margin_general}
Let $\cF$ be any sturdy function class defined over input space $\cX$, $ z = \{ x_i, y_i\}_{i: t_i = t}$, where $x_i, y_i \sim p_t(X, Y)$, and $C_t$ be as defined in~\eqref{eqn:cortes_c}. For $f^t_l \in \cF$, and any $\gamma>0$, let the associated $\underline{D}^{\boldsymbol{w}_t}(z, f_t^1, \gamma ) = D >0$. With a probability $ 1-\delta$ over the draw of random samples, we have that: 
\begin{equation}
\begin{aligned}
  \underline{R}_{f^l_t}(\gamma) & \leq \frac{4 C_t (k_t + \log\frac{1}{\delta} )}{3n_t} + \sqrt{\frac{8 d_2(p||p_t) ( k_t + \log\frac{1}{\delta})}{n_t}}.
 \end{aligned}
\end{equation}
where, for $t\in \{0,1\}$,
\begin{align*}
   k_t  & = \bigg\lceil \log \cN(\sfrac{\gamma}{2}, \cF, 2n_t)  + \log \cN(\sfrac{\gamma}{2}, L^D(\cX), 2n_t)  \bigg\rceil~.
\end{align*}
\end{thmappthm}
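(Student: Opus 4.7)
The plan is to reduce Theorem~\ref{thm:soft_margin_general} to a hard-margin statement and then invoke (an extension of) Lemma~\ref{lem:hard_margin}. The first observation is that Lemma~\ref{lem:hard_margin}, although stated for linear kernel functions, extends verbatim to any sturdy class $\cF$: its proof only uses sturdiness (via Lemma~\ref{lem:inf_gamma}), the double-sample / ghost trick, a union bound over a $\gamma$-cover, Bernstein's inequality, and the second-moment control of Lemma~\ref{lem:cortes}, none of which require linearity. So I take as a black box the bound of equation~\eqref{eq:thm1} with $k_t = \lceil \log \cN(\gamma, \cF, 2n_t) \rceil$ whenever $\underline{D}^{\boldsymbol{w}_t}(z, f^t_l, \gamma) = 0$ and $\cF$ is sturdy.

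To handle $D > 0$, I would use the auxiliary-function construction of Definition~\ref{def:aux_func}. For each $f \in \cF$ set $\tilde{f}(x, \mathbf{y}) := f(x) + g_f(\mathbf{y})$ with $g_f$ the slack-indicator of Definition~\ref{def:aux_func}. By construction $\|g_f\|_1 = \underline{D}^{\boldsymbol{w}_t}(z, f, \gamma) \leq D$, so $g_f \in L^D(\cX)$; and on every training pair $(x_i, y_i)$ the augmented function satisfies $\underline{r}_{\tilde{f}}(x_i, y_i) \geq \gamma$, i.e., \emph{zero} hard-margin training violations on the embedded sample $(\varpi(x_i), y_i)$. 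Moreover, the support of $g_f$ is contained in the training inputs, so on an independently drawn test point $X$ the auxiliary part vanishes almost surely and $\tilde{f}(X, \varpi(X)) = f(X)$; hence $\underline{R}_{\tilde{f}}(\gamma) = \underline{R}_{f}(\gamma)$. This is the step that converts the soft-margin problem for $f$ into a hard-margin problem for $\tilde{f}$ over the enlarged class $\tilde{\cF} := \{\, f + g : f \in \cF,\ g \in L^D(\cX)\,\}$.

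Next I apply the sturdy hard-margin bound to $\tilde{\cF}$, which yields exactly the bound in~\eqref{eq:thm1} but with $\log \cN(\gamma, \tilde{\cF}, 2n_t)$ in the complexity term. To split this joint covering number into contributions from $\cF$ and $L^D(\cX)$, I would use the standard product-cover argument in the empirical $\ell_\infty$ metric on $2n_t$ points: pick a $\gamma/2$-cover $U_\cF$ of $\cF$ and a $\gamma/2$-cover $U_L$ of $L^D(\cX)$; for any $\tilde{f} = f + g$ find $f_\bdt \in U_\cF$ with $\|f - f_\bdt\|_\infty \leq \gamma/2$ and $g_\bdt \in U_L$ with $\|g - g_\bdt\|_\infty \leq \gamma/2$, and by the triangle inequality $f_\bdt + g_\bdt$ is within $\gamma$ of $\tilde{f}$. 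Hence
\[
\log \cN(\gamma, \tilde{\cF}, 2n_t) \;\leq\; \log \cN(\gamma/2, \cF, 2n_t) + \log \cN(\gamma/2, L^D(\cX), 2n_t),
\]
which, after substitution and the standard Bernstein-inversion to probability $1-\delta$ already carried out in Lemma~\ref{lem:hard_margin}, delivers the stated $k_t$.

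The main obstacle is making the embedding argument airtight through the double-sample / swap step of the ghost-trick: one must check that after any permutation of training and ghost points the single slack function $g_f$ attached to the \emph{original} training configuration still lies in $L^D(\cX)$ and still vanishes on the swapped-in ghost coordinates, so that one cover $U_L$ suffices across all permutations. A secondary subtlety is that $g_f$ vanishing on test points requires $\Pr(X = x_i) = 0$ for the training points; for atomic distributions one has to absorb a negligible mass into the bound, but this does not affect the stated rate. Everything else (sturdiness, application of Lemma~\ref{lem:cortes} to get the $d_2(p\|p_t)$ variance term, and Bernstein's inequality yielding the two summands in~\eqref{eq:thm1}) is inherited directly from the proof of Lemma~\ref{lem:hard_margin}.
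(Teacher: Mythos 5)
Your proposal is correct and follows essentially the same route as the paper, which defers its proof to Theorem~14 of \citet{scholkopf01}: augment $\cF$ by the slack class $L^D(\cX)$ so that $f+g_f$ has zero hard-margin violations on the training sample, apply the sturdy hard-margin bound (your generalization of Lemma~\ref{lem:hard_margin}) to the sum class, and split $\log\cN(\gamma,\cF+L^D(\cX),2n_t)$ into the two $\gamma/2$-covering numbers appearing in $k_t$ via the product-cover argument. The ghost-swap and non-atomicity subtleties you flag are precisely the points handled in the cited theorem, so your reconstruction matches the paper's (referenced rather than written-out) argument.
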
{}

\begin{thmproofsketch}
The proof extends lemma~\ref{lem:hard_margin}, replacing the function class $\cF$ with the function class of the augmented space, that is $\cF + L(\cX) := \{ f + g : f \in \cF, g \in L(\cX) \}$. The details of the proof are identical to theorem 14 in \citet{scholkopf01}, and are hence omitted. 
\end{thmproofsketch}{}

The following lemma, restated from \citet{shawe2002generalisation} gives a bound on the auxiliary function complexity for linear functions (defined in kernel spaces). 
\begin{thmapplem}\label{lem:aux_linear}
Due to \citet{shawe2002generalisation}.
For $D>0$, all $\gamma >0$:
\begin{align*}
    & \log \cN(\gamma, L^D(\cX ), n)  \\
    &  \leq \bigg\lfloor \frac{D}{2\gamma} \bigg\rfloor   \log \bigg( \frac{\exp(n + \sfrac{D}{2\gamma} - 1) }{\sfrac{D}{2\gamma}} \bigg)
\end{align*}{}
\end{thmapplem}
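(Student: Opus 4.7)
Since Lemma~\ref{lem:aux_linear} is restated from \citet{shawe2002generalisation}, the plan is to reproduce the two-step argument from that paper. I first reduce the infinite-dimensional covering of $L^D(\cX)$ to covering a truncated orthant via sample restriction, then build a discrete grid $\gamma$-net whose cardinality is counted by stars-and-bars, and finally loosen the resulting binomial into the exponential form displayed in the lemma.

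\textbf{Step 1: reduction and a combinatorial $\gamma$-net.} Since $\cN(\gamma, L^D(\cX), n)$ sees $f \in L^D(\cX)$ only through its values on the sample, and any such $f$ satisfies $\sum_i f(x_i) \le \|f\|_1 \le D$ by Definition~\ref{def:aux_func}, it suffices to cover
\[
S_n^D \;:=\; \{\,v \in \mathbb{R}_{\ge 0}^n:\, \textstyle\sum_i v_i \le D\,\}
\]
in the $l_\infty$-metric. Let $M := \lfloor D/(2\gamma) \rfloor$ and consider the grid net
\[
\mathcal U \;:=\; \{\,(2\gamma m_1, \dots, 2\gamma m_n)\;:\; m_i \in \mathbb{Z}_{\ge 0},\ \textstyle\sum_i m_i \le M\,\}.
\]
The key step is to show that every $v \in S_n^D$ lies within $l_\infty$-distance $\gamma$ of some $u \in \mathcal U$, by a packing-style rounding in which each $2\gamma$-unit of grid mass placed at coordinate $i$ is paid for by at least $2\gamma$ units of $v$'s $l_1$-mass. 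This controls the integer budget $\sum_i m_i$ by $\lfloor D/(2\gamma) \rfloor$ while keeping the per-coordinate approximation error within $\gamma$. Stars-and-bars then counts $|\mathcal U| = \binom{n+M}{M}$, giving $\cN(\gamma, L^D(\cX), n) \le \binom{n+M}{M}$.

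\textbf{Step 2: from binomial to exponential.} Combining $\binom{n+M}{M} \le (n+M)^M / M!$ with the elementary bound $M! \ge (M/e)^M$ gives $\binom{n+M}{M} \le (e(n+M)/M)^M$. Using the inequality $e\cdot x \le e^x$ (valid for all $x\ge 0$), the bracket loosens to $e^{n+M-1}/M$ once $n+M$ is at least a modest constant, yielding
\[
\binom{n+M}{M} \;\le\; \left(\frac{e^{\,n+M-1}}{M}\right)^{\!M}.
\]
Because $\log(1+t/M) \le t$ for $t \ge 0$, replacing the integer $M$ inside the logarithm by the real-valued $D/(2\gamma)$ only enlarges the bound. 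Taking logarithms and substituting $M = \lfloor D/(2\gamma)\rfloor$ recovers exactly the stated inequality $\lfloor D/(2\gamma)\rfloor \log(e^{n+D/(2\gamma)-1}/(D/(2\gamma)))$.

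\textbf{Main obstacle.} The hard part is the packing step in Step 1: exhibiting a $\gamma$-cover whose integer budget respects $M = \lfloor D/(2\gamma)\rfloor$. Naive coordinate-wise rounding (to nearest multiple of $2\gamma$, or floored at that scale) preserves a $\gamma$ per-coordinate error but can produce $\sum_i m_i$ on the order of $\lfloor D/\gamma \rfloor + O(n)$, overshooting the target. The technical content of \citet{shawe2002generalisation} is to design a coupled rounding that matches units of grid mass to units of $v$'s $l_1$-mass, ensuring $2\gamma \sum_i m_i \le D$ while retaining the $l_\infty$-approximation quality. Once this combinatorial control is in hand, the stars-and-bars count and the algebraic chain of Step 2 are routine, and the claimed exponential form follows.
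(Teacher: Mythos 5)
The paper itself gives no proof of this lemma---it is imported verbatim from \citet{shawe2002generalisation}---so your reconstruction has to stand on its own, and as written it does not. The overall strategy (restrict to the sample, quantize the values, count the net by stars and bars, then pass from a binomial coefficient to an exponential form) is the right family of argument, but the step you yourself flag as the ``main obstacle'' is not merely deferred technical work: for the net you actually specify it is false. The points $2\gamma(m_1,\dots,m_n)$ with $m_i\in\mathbb{Z}_{\ge 0}$ and $\sum_i m_i\le M=\lfloor D/(2\gamma)\rfloor$ do not form a $\gamma$-cover in $l_\infty$ of the set $S_n^D=\{v\in\mathbb{R}_{\ge 0}^n:\sum_i v_i\le D\}$ you reduce to: take $n\ge 2$, $D=3\gamma$ (so $M=1$) and $v=(1.5\gamma,\,1.5\gamma,\,0,\dots,0)\in S_n^D$; the only admissible net points are the origin and the vectors $2\gamma e_i$, all at $l_\infty$-distance $1.5\gamma>\gamma$ from $v$. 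No ``coupled rounding'' can rescue that particular net; the standard fix is to move the net off the lattice, e.g.\ $\mathcal U=\{((2j_1+1)\gamma,\dots,(2j_n+1)\gamma)\,:\,j_i\in\mathbb{Z}_{\ge 0},\ \sum_i j_i\le M\}$, where choosing $j_i=\lfloor v_i/(2\gamma)\rfloor$ gives per-coordinate error at most $\gamma$ and the budget constraint is immediate from $2\gamma\sum_i j_i\le\sum_i v_i\le D$; stars and bars then counts $|\mathcal U|$ and no delicate packing argument is needed. Since exhibiting such a cover with the claimed cardinality is the entire content of the lemma, leaving it to the cited paper means the proposal does not prove the statement.

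Step 2 is also shaky. The manoeuvre ``$e\cdot x\le e^x$, so the bracket loosens to $e^{\,n+M-1}/M$'' is reverse-engineering what is almost certainly a typo: comparing with Theorem~\ref{lem:soft_margin_fat_supp}, where the same quantity is written $e(n_t+D/\gamma-1)$, the expression $\exp(n+D/(2\gamma)-1)$ in the lemma should be read as $e\cdot(n+D/(2\gamma)-1)$, i.e.\ the standard bound $\binom{a}{b}\le(ea/b)^b$; inflating $e(n+M)$ to $e^{\,n+M-1}$ yields a differently shaped (and vastly weaker) quantity and is not what the source proves. More importantly, your justification for replacing the integer $M$ by the real value $D/(2\gamma)$ inside the logarithm is backwards: the map $x\mapsto (n+x-1)/x$ is decreasing for $n>1$, so substituting the larger value $D/(2\gamma)\ge\lfloor D/(2\gamma)\rfloor$ makes the logarithmic factor smaller, not larger, and the needed inequality does not follow from $\log(1+t/M)\le t$. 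So both places where the argument must do real work---constructing a valid $\gamma$-cover within the stated budget, and the final algebraic comparison with the stated right-hand side---are respectively missing and incorrect as written.
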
{}

Finally, by replacing the auxiliary function term from theorem~\ref{thm:soft_margin_general} (that is  $\log \cN(\sfrac{\gamma}{2}, L^D(\cX), 2n_t)$) with its bound for linear functions acquired from lemma~\ref{lem:aux_linear} (that is $\log \frac{\exp(n_t + \sfrac{D}{\gamma} -1)}{\sfrac{D}{\gamma}}$), we get the proof for theorem~\ref{thm:soft_margin}.

%---------------------------------------------%
%---------Corollary for ITE-------------------%
%---------------------------------------------%
\section{Risk of overestimation of ITE}
The risk of overestimation for the ITE can be stated as a simple extension of theorem~\ref{thm:soft_margin}. We define the ITE as $\tau(x) = Y(x, 1) - Y(x, 0)$, where $Y(x, t)$ is the potential outcome under treatment $T=t$, for patient with characteristics $X = x$. We use $\tilde{\tau}_l(x)$ to denote $f^1_l(x) - f^0_u(x)$, where $f^1_l, f^0_u$ are some estimates of the lower bound for the outcome under treatment and the upper bound of the outcome under non-treatment respectively. In addition, we define: 
$$
    \overline{r}_f(x, y) = f(x) -y, 
$$
and for $z_t = \{ x_i, y_i\}_{i: t_i = t}$, define
\begin{align*}
    \overline{D}^{\boldsymbol{w}_t}(z, f^t_u, \gamma) = \sum_{x,y \in z}  w_t(x) \min\{0, \gamma - \overline{r}_{f^t_u}(x,y) \} 
\end{align*}

\begin{thmappcol}\label{col:ite_fcr}
Let $\cF$ be the class of linear functions in a kernel defined feature space, $ z_t = \{ x_i, y_i\}_{i: t_i = t}$, where $x_i, y_i \sim p_t(X, Y)$, and $C_t$ be as defined in expression~\eqref{eqn:cortes_c}. For $f^1_l, f^0_u \in \cF$, and any $\gamma>0$, let the associated $\underline{D}^{\boldsymbol{w}_1}(z_1, f^1_l, \gamma ) = D_1 >0$, and $\overline{D}^{\boldsymbol{w}_0}(z_0, f^0_u, \gamma ) = D_0 >0$ Define  $\tilde{\tau}_l := f^1_l- f^0_u$. With probability $1-\delta$ over random samples, we have that:
\begin{equation}
\begin{aligned}
  \underline{R}_{\hat{\tau}_l}(\gamma) & \leq \sum_t \frac{4 C_t (k_t + \log\frac{1}{\delta} )}{3n_t} \\ & + \sqrt{\frac{8 d_2(p||p_t) ( k_t + \log\frac{1}{\delta})}{n_t}}.
 \end{aligned}
\end{equation}
where, for $t\in \{0,1\}$,
\begin{align*}
   k_t  & = \bigg\lceil \log \cN(\sfrac{\gamma}{2}, \cF, 2n_t)  + \log \cN(\sfrac{\gamma}{2}, L^{D_t}(\cX), 2n_t)  \bigg\rceil~.
\end{align*}
\end{thmappcol}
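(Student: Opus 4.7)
The plan is to reduce the corollary to two applications of Theorem~\ref{thm:soft_margin} (equivalently, Theorem~\ref{thm:soft_margin_general} specialized to linear kernels) via a union bound that decomposes the ITE overestimation event into overestimation of the lower bound on $Y(1)$ and underestimation (in the upper-bound sense) of $Y(0)$.

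First I would rewrite the event inside $\underline{R}_{\tilde{\tau}_l}(\gamma)$. By definition,
\begin{align*}
\tau(X) - \tilde{\tau}_l(X) &= \bigl(Y(1) - f^1_l(X)\bigr) + \bigl(f^0_u(X) - Y(0)\bigr) \\
&= \underline{r}_{f^1_l}\bigl(X,Y(1)\bigr) + \overline{r}_{f^0_u}\bigl(X,Y(0)\bigr),
\end{align*}
so $\tau(X)-\tilde{\tau}_l(X) < \gamma$ implies that at least one of the two summands is $<\gamma$ (since, if both were $\geq \gamma$, their sum would be $\geq 2\gamma > \gamma$). A union bound then yields
\[
\underline{R}_{\tilde{\tau}_l}(\gamma) \;\leq\; \underline{R}_{f^1_l}(\gamma) \;+\; \overline{R}_{f^0_u}(\gamma),
\]
where $\overline{R}_{f^0_u}(\gamma) := \Pr_{X,Y(0)}[\overline{r}_{f^0_u}(X,Y(0)) < \gamma]$ is the analogous underestimation risk for the upper bound on $Y(0)$.

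Next I would apply Theorem~\ref{thm:soft_margin} to the first term, with $t=1$, $D=D_1$, confidence level $\delta/2$, yielding the $t=1$ summand in the claimed bound (up to the $\log(1/\delta)$ vs. $\log(2/\delta)$ constant absorbed in the statement). For the second term, I would invoke the obvious symmetric version of Theorem~\ref{thm:soft_margin} obtained by replacing $\underline{r}_f$ with $\overline{r}_f = f(x)-y$ throughout; the proof of Lemma~\ref{lem:hard_margin} and its soft-margin extension are invariant under the sign flip $Y \mapsto -Y$, $f \mapsto -f$, which maps lower-bound overestimation into upper-bound underestimation and preserves sturdiness, covering numbers, importance weights, and the training-violation functional $\underline{D}^{\boldsymbol{w}_t} \leftrightarrow \overline{D}^{\boldsymbol{w}_t}$. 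Applied with $t=0$, $D=D_0$, and confidence $\delta/2$, this gives the $t=0$ summand.

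Taking a union bound over the two $\delta/2$ failure events produces a simultaneous high-probability statement with total confidence $1-\delta$, and summing the two per-arm bounds yields exactly the $\sum_t$ expression in the corollary. The one subtlety worth flagging is that the target probabilities $\underline{R}_{f^1_l}(\gamma)$ and $\overline{R}_{f^0_u}(\gamma)$ are over the counterfactual laws $p(X,Y(t))$, but this is already handled inside Theorem~\ref{thm:soft_margin} via Assumption~\ref{assump:id} and the importance weights $w_t$, so no additional identification argument is needed. The only mildly nonroutine step is verifying that the sturdiness/covering-number machinery and the bounds $C_t$, $d_2(p\|p_t)$ transfer cleanly under the sign flip used for the upper-bound arm, which I would state briefly rather than reprove.
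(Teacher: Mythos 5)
Your proposal correctly reproduces the paper's argument: decompose the ITE overestimation event into per-arm residual events, apply Theorem~\ref{thm:soft_margin} to each arm (with the sign-flip symmetrization you describe handling the upper-bound arm on $Y(0)$), and combine with a union bound. The only notable difference is a refinement on your part: you allocate $\delta/2$ per arm, whereas the paper invokes both per-arm high-probability bounds at level $\delta$ without addressing the failure-probability budget, so your version is slightly more careful (at the cost of $\log(2/\delta)$ in place of $\log(1/\delta)$, as you acknowledge).
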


\begin{thmproof}
Consider the event:
$$
E = \big\{ x : \tau(x) < \tilde{\tau}_l(x) - 2\gamma \big\}
$$
where $x \sim p$. Note that event $E$ implies that one of the following two events must hold:
$$
E_1 = \big\{ (x, y) : \underline{r}_{f^1_l}(x,y) < \gamma \big\}
$$
for $t=1$. 
$$
E_0 = \big\{ (x, y_0) : \overline{r}_{f^0_u}(x,y)  < \gamma  \big\}
$$
for $t=0$. 

Note that $p(E_1) = \underline{R}_{f^1_l}(\gamma)$. So, theorem~\ref{thm:soft_margin_general} implies that
\begin{align*}
   & p(E_1)\leq \frac{4 C_t (k_t + \log\frac{1}{\delta} )}{3n_t} + \sqrt{\frac{8 d_2(p||p_t) ( k_t + \log\frac{1}{\delta})}{n_t}}
\end{align*}
for $k_t$ as defined in theorem~\ref{thm:soft_margin_general}.
Similarly $p(E_0) = \overline{R}(f^0_u)$, and by a similar construction can obtain the bound on $p(E_0)$. 
Using a union bound we have that
\begin{align*}
    p(E) & = p(E_1 \cup E_0) = p(E_1) + p(E_0) - p(E_1 \cap E_0)\\
    & \leq p(E_1) + p(E_0), 
\end{align*}
which completes the proof.
\qed
\end{thmproof}

%---------------------------------------------------------%
%-----------SECTION: THM 2 PROOF--------------------------%
%---------------------------------------------------------%
\section{Proof of Theorem~\ref{thm:soft_margin_fat}}
To build up to the proof of theorem~\ref{thm:soft_margin_fat}, we first seek a bound on the fat-shattering dimension of functions defined in definition~\ref{def:lb_class}. This bound is constructed in a similar spirit to theorem 1.6 in \cite{bartlett1999generalization}. Specifically, to get a bound on the fat-shattering dimension, we rely on the lemmas~\ref{lem:fat_lower} and ~\ref{lem:fat_upper}. The former shows that the sum of any shattered set is far from the remainder of that set, the latter shows that the same sums cannot be too far apart. 

%------Lemma for lower bound------%
\begin{thmapplem}\label{lem:fat_lower}
Let $\cF_u, \cF_l, A, B$ be as defined in definition~\ref{def:lb_class}. Let  $ I = \{ x_i\}_{i=1}^n$, where $x_i \sim p(X, Y)$.For a fixed $\gamma >0$, if $I$ is $\gamma-$shattered by $\cF_l$ then every subset $I' \in I$ satisfies: 
\begin{align*}
    \min_{q \in \{p, 2\}}\bigg\| \sum_{i \in I'}x_i - \sum_{i \in I\setminus I'}x_i \bigg\|_q \geq \frac{2n\gamma}{A + B}
\end{align*}{}
\end{thmapplem}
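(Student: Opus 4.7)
The plan is to follow the classical fat-shattering argument of~\cite{bartlett1999generalization}, instantiated for the structured pair $(\cF_l,\cF_u)$ of Definition~\ref{def:lb_class}. Fix a subset $I'\subseteq I$ and apply the shattering hypothesis to the two opposite binary labelings defined by $\sigma_i=+1\Leftrightarrow i\in I'$ and its negation. This produces witness offsets $\{s_i\}_{i=1}^n$ and two functions $f_+,f_-\in\cF_l$, with parameters $\theta_+,\theta_-$, such that $f_\pm(x_i)$ exceeds $s_i+\gamma$ (resp.\ falls below $s_i-\gamma$) on its positively (resp.\ negatively) labeled indices. Subtracting the two inequalities at each $x_i$ cancels $s_i$ and yields $f_+(x_i)-f_-(x_i)\geq 2\gamma$ for $i\in I'$ and $\leq-2\gamma$ for $i\in I\setminus I'$. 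Summing and using linearity of $f_+-f_-$ gives
\[
\langle \theta_+-\theta_-,\ \textstyle\sum_{i\in I'} x_i-\sum_{i\in I\setminus I'} x_i\rangle\ \geq\ 2n\gamma.
\]

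Setting $v:=\sum_{i\in I'} x_i-\sum_{i\in I\setminus I'} x_i$, I would apply duality in two ways in parallel: H\"older's inequality gives $\|\theta_+-\theta_-\|_{p^*}\|v\|_p\geq 2n\gamma$ with $p^*$ the exponent conjugate to $p$, and the Cauchy--Schwarz inequality gives $\|\theta_+-\theta_-\|_2\|v\|_2\geq 2n\gamma$. Dividing through yields matching lower bounds $\|v\|_p,\|v\|_2\geq 2n\gamma/\|\theta_+-\theta_-\|$, so that taking the minimum over $q\in\{p,2\}$ will deliver the claim \emph{provided} one can show that $\|\theta_+-\theta_-\|\leq A+B$ in both norms.

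The main obstacle is establishing this sharper $A+B$ bound, rather than the immediate $2A$ bound available from the triangle inequality applied to two elements of $\cF_l$. I would leverage the informative coupling from Definition~\ref{def:lb_class}: every $f_l\in\cF_l$ admits a companion $f_u\in\cF_u$ with $\|f_u-f_l\|<B$ and $f_l\leq f_u$ pointwise on $\cX$. The strategy is to introduce a common anchor (e.g., take the $f_u$ paired with one of the $f_\pm$) and rewrite $\theta_+-\theta_-$ through this anchor, so that the difference splits into a piece controlled by the $\cF_l$ radius $A$ and a piece controlled by the $\cF_l$-to-$\cF_u$ perturbation radius $B$. A final triangle-inequality step on this decomposition then gives $\|\theta_+-\theta_-\|\leq A+B$, and substituting into the H\"older and Cauchy--Schwarz inequalities from the previous paragraph completes the proof. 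This anchor/decomposition step is the one that genuinely uses the informative structure; everything before it is a standard double-labeling trick.
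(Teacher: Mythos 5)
Your double-labeling setup is clean through the inequality $\langle \theta_+ - \theta_-,\ v\rangle \geq 2n\gamma$ with $v := \sum_{i\in I'} x_i - \sum_{i\in I\setminus I'} x_i$, and the parallel H\"older/Cauchy--Schwarz step correctly reduces the claim to a norm bound on $\theta_+ - \theta_-$. The gap is precisely the bound $\|\theta_+ - \theta_-\| \leq A + B$ that you flag as the ``main obstacle,'' and the anchor strategy you sketch does not close it. Both $\theta_+$ and $\theta_-$ lie in $\cF_l$, so the only bound your construction makes available is $\|\theta_+ - \theta_-\| \leq \|\theta_+\| + \|\theta_-\| \leq 2A$. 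Routing through a companion $f_u^\pm \in \cF_u$ only makes things worse: writing $\theta_+ - \theta_- = (\theta_+ - \theta_u^+) + (\theta_u^+ - \theta_-)$ controls the first term by $B$ but the second term only by $\|\theta_u^+\| + \|\theta_-\| \leq (A+B) + A$, and any other anchoring gives $2A + 2B$ or worse. The structural reason is that $A+B$ requires the functional in the key inner product to decompose into \emph{one} $\cF_l$ element (norm $\leq A$) plus \emph{one} $\cF_u$-minus-$\cF_l$ increment (norm $\leq B$); a difference of two $\cF_l$ elements can never produce that split, and the double-labeling step never injects an $f_u$ into the inner product at all.

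The paper instead works with a single labeling: after a WLOG ordering of the witness sums, it derives $\langle f_l, v\rangle \geq n\gamma$ and separately asserts $\langle f_u - f_l, v\rangle \geq n\gamma$ for a companion $f_u \in \cF_u$, then adds the two to get $\langle f_l + (f_u - f_l), v\rangle \geq 2n\gamma$. The functional there is $f_l + (f_u - f_l)$, which decomposes exactly into an $A$-bounded and a $B$-bounded piece, producing the $A+B$ in the denominator. If you want to salvage a correct proof along your lines, you should switch to the paper's single-labeling decomposition (and note that even there the existence of a companion $f_u$ satisfying the required sign conditions is the delicate step, not an automatic one); the double-labeling route, while elegant and standard for the classical $\gamma/A$-type bound, terminates at $n\gamma/A$ and cannot reach $2n\gamma/(A+B)$.
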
{}

\begin{thmproof}
If $I$ is $\gamma$ shattered by $\cF_l$, denote the corresponding ``witness'' vector by $\{s_i\}_{i=1}^n$, then for all $\boldsymbol{\sigma}=\{\sigma_1 \hdots \sigma_i \hdots \sigma_n\}$ there is an $f$ with $\|f_l\| \leq A$ such that $\sigma_i\cdot(\theta^\top x_i - s_i) \geq \gamma$ for $i =1 \hdots n$.
Suppose that:
\begin{align}\label{exp:sumgeq}
    \sum_{i \in I'}s_i \geq \sum_{i \in I\setminus'I}s_i 
\end{align}{}
Then fix $\sigma_i = 1$ if $i \in I'$. In that case we have that 
\begin{align}
    \label{exp:theta_id} & \langle f_l, x_i \rangle \geq s_i + \gamma & \forall i \in I' \\
    \label{exp:theta_ind} & \langle f_l, x_i \rangle  < s_i - \gamma & \forall i \in I\setminus I'.
\end{align}{}

Pick $f_u \in \cF_u$ such that $|| f_u - f_l ||_p = B' \leq B$, and:
\begin{align}
    \label{exp:diff_id} & \langle f_u - f_l, x_i \rangle \geq s_i + \gamma & \forall i \in I' \\
    \label{exp:diff_ind} & \langle f_u - f_l, x_i \rangle  < s_i - \gamma & \forall i \in I\setminus I'. 
\end{align}{}
Showing that such a function exists is trivial: simply take $f_u := f_l$. For that we have $||f_u - f_l|| = 0 \leq B$, which means that the function does exist in $\cF_u$. 

From expression~\ref{exp:theta_id}, we have that: 
\begin{align*}
    \big\langle f_l , \sum_{i \in I'} x_i \big\rangle= 
    \sum_{i \in I'} \langle f_l,  x_i \rangle
    \geq \sum_{i \in I'}s_i + Card(I')\gamma, 
\end{align*}{}
where $Card(.)$ denotes the cardinality. Similarly for $I\setminus I'$, we have that 

\begin{align*}
    \big\langle f_l , \sum_{i \in I\setminus I'} x_i \big\rangle
     &< \sum_{i \in I\setminus I'} s_i + Card(I\setminus I')\gamma
\end{align*}{}

Combining the expressions for $I'$ and $I \setminus I'$, and from expression~\ref{exp:sumgeq}:
\begin{align}\label{exp:theta_sums}
    \big\langle f_l,   \sum_{i \in I'} x_i - \sum_{i \in I\setminus I'} x_i \big\rangle \geq n\gamma.
\end{align}{}

We now construct the same arguments for the distance. Let $f_d := f_u - f_l$. From expression~\ref{exp:diff_id}, we have that: 
\begin{align*}
    \big\langle f_d , \sum_{i \in I'} x_i \big\rangle= 
    \sum_{i \in I'} \langle f_d,  x_i \rangle
    \geq \sum_{i \in I'}s_i + Card(I')\gamma, 
\end{align*}{}

and from expression~\ref{exp:diff_ind}:
\begin{align*}
    \big\langle f_d , \sum_{i \in I\setminus I'} x_i \big\rangle
     &< \sum_{i \in I\setminus I'} s_i + Card(I\setminus I')\gamma
\end{align*}{}

Combining the two, and from expression~\ref{exp:sumgeq}:
\begin{align}\label{exp:diff_sums}
    \big\langle f_d,   \sum_{i \in I'} x_i - \sum_{i \in I\setminus I'} x_i \big\rangle \geq n\gamma.
\end{align}{}

Putting expressions~\ref{exp:theta_sums} and~\ref{exp:diff_sums} together,
\begin{align}\label{exp:both_sums}
    &\big\langle f_l,   \sum_{i \in I'} x_i - \sum_{i \in I\setminus I'} x_i \big\rangle \\
    & + \big\langle f_d,   \sum_{i \in I'} x_i - \sum_{i \in I\setminus I'} x_i \big\rangle
     \geq 2n\gamma.
\end{align}{}

Note that by Cauchy-Schwartz, 
\begin{align*}
   \big\langle f_l,   \sum_{i \in I'} x_i - \sum_{i \in I\setminus I'} x_i \big\rangle  & \leq \| f_l \| \bigg\|  \sum_{i \in I'} x_i - \sum_{i \in I\setminus I'} x_i \bigg\| \\
    & \leq A \bigg\|  \sum_{i \in I'} x_i - \sum_{i \in I\setminus I'} x_i \bigg\| \\
   &   \leq A  \min_{q \in \{p, 2\}}\bigg\| \sum_{i \in I'}x_i - \sum_{i \in I\setminus I'}x_i \bigg\|_q.  
\end{align*}{}

and, 
\begin{align*}
   \big\langle f_d,   \sum_{i \in I'} x_i - \sum_{i \in I\setminus I'} x_i \big\rangle & \leq \| f_d \|_p \bigg\|  \sum_{i \in I'} x_i - \sum_{i \in I\setminus I'} x_i \bigg\|_p \\
     & \leq B' \bigg\|  \sum_{i \in I'} x_i - \sum_{i \in I\setminus I'} x_i \bigg\|_p \\
     &  \leq B \bigg\|  \sum_{i \in I'} x_i - \sum_{i \in I\setminus I'} x_i \bigg\|_p \\
   &  \leq B \min_{q \in \{p, 2\}}\bigg\| \sum_{i \in I'}x_i - \sum_{i \in I\setminus I'}x_i \bigg\|_q.
\end{align*}{}

For expression~\ref{exp:both_sums} to hold: 
\begin{align*}
   & A  \min_{q \in \{p, 2\}}\bigg\| \sum_{i \in I'}x_i - \sum_{i \in I\setminus I'}x_i \bigg\|_q   \\
   & +  B  \min_{q \in \{p, 2\}}\bigg\| \sum_{i \in I'}x_i - \sum_{i \in I\setminus I'}x_i \bigg\|_q  \geq 2n\gamma \\
    & (A + B)\min_{q \in \{p, 2\}}\bigg\| \sum_{i \in I'}x_i - \sum_{i \in I\setminus I'}x_i \bigg\|_q  \geq 2n\gamma \\
    & \min_{q \in \{p, 2\}}\bigg\| \sum_{i \in I'}x_i - \sum_{i \in I\setminus I'}x_i \bigg\|_q  \geq \frac{2n\gamma}{(A+ B)}, 
\end{align*}{}
which completes the proof. 

\end{thmproof}{}

%-----Lemma for upper bound-------%
\begin{thmapplem}\label{lem:fat_upper}
Let $\cF_u, \cF_l, r$ be as defined in definition~\ref{def:lb_class}. Let  $ I = \{ x_i\}_{i=1}^n$, where $x_i \sim p(X, Y)$.For a fixed $\gamma >0$, if $I$ is $\gamma-$shattered by $\cF_l$ then every subset $I' \in I$ satisfies: 
\begin{align*}
   \bigg\| \sum_{i \in I'}x_i - \sum_{i \in I\setminus I'}x_i \bigg\| \leq \sqrt{n} r
\end{align*}{}
\end{thmapplem}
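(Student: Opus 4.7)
The plan is to identify each subset $I' \subseteq I$ with the sign pattern $\sigma \in \{-1,+1\}^n$ via $\sigma_i = 2\indic\{i \in I'\} - 1$, so that
\begin{align*}
\sum_{i \in I'} x_i - \sum_{i \in I \setminus I'} x_i = \sum_{i=1}^n \sigma_i x_i,
\end{align*}
and then control $\|\sum_{i=1}^n \sigma_i x_i\|$ via a second-moment / Rademacher-averaging computation. This is the canonical route for lemmas of this type in the Bartlett--Shawe-Taylor--Williamson fat-shattering machinery the paper cites.

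Next I would expand the squared 2-norm,
\begin{align*}
\Big\|\sum_{i=1}^n \sigma_i x_i\Big\|^2 = \sum_{i=1}^n \|x_i\|^2 + \sum_{i \neq j} \sigma_i \sigma_j \langle x_i, x_j\rangle,
\end{align*}
and average over $\sigma$ drawn uniformly on $\{-1,+1\}^n$. Since $\E[\sigma_i \sigma_j] = 0$ for $i \neq j$, the cross terms vanish in expectation; combined with the ball constraint $\|x_i\| \leq r$ from Definition~\ref{def:lb_class}, this gives $\E_\sigma \|\sum_i \sigma_i x_i\|^2 \leq n r^2$, and Jensen's inequality yields $\E_\sigma \|\sum_i \sigma_i x_i\| \leq \sqrt{n}\,r$. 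In particular, at least one subset $I^{*} \subseteq I$ attains signed sum of norm $\leq \sqrt{n}\,r$.

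The main obstacle, as I read it, is the universal quantifier ``every subset $I'$'' as literally worded: taken as a pointwise worst-case bound it does not follow from the stated hypotheses, since collinear $x_i$ of norm $r$ yield $\|\sum_i x_i\| = nr$ and can nonetheless be $\gamma$-shattered for suitable $A$ and small $\gamma$, so the shattering hypothesis by itself does not force the near-orthogonality required. The averaging argument above is nevertheless what the subsequent use of the lemma requires: Lemma~\ref{lem:fat_lower} supplies a lower bound $\geq 2n\gamma/(A+B)$ that does hold for \emph{every} subset, so instantiating it at the particular subset $I^{*}$ isolated by the averaging argument produces the sandwich $2n\gamma/(A+B) \leq \sqrt{n}\,r$ and hence the fat-shattering bound $n \leq \bigl(r(A+B)/(2\gamma)\bigr)^{2}$ used in Theorem~\ref{thm:soft_margin_fat}. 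Pushing past the existence form to a literal ``every subset'' statement would require additional structural use of the shattering hypothesis to enforce an approximate-orthogonality condition on the Gram matrix $(\langle x_i, x_j\rangle)_{i,j}$; I do not see a clean way to extract this from the assumptions as written, and the existence form is what the rest of the argument actually consumes.
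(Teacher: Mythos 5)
Your averaging argument is exactly the proof the paper relies on: the paper omits the proof by citing Lemma~1.3 of \cite{bartlett1999generalization}, which is precisely the random-signs computation $\E_\sigma\big\|\sum_{i}\sigma_i x_i\big\|^2=\sum_i\|x_i\|^2\le n r^2$ followed by Jensen, and that source states the conclusion existentially (``some sign pattern''), requiring no shattering hypothesis at all; your reading of the quantifier is therefore right, and the downstream argument in Lemma~\ref{lem:bd_fat} only needs the lower bound of Lemma~\ref{lem:fat_lower} (which does hold for every subset) instantiated at the single subset your averaging isolates. One correction to your counterexample for the literal ``every subset'' wording: exactly collinear points of norm $r$ cannot be $\gamma$-shattered for $n\ge 2$, since the function class restricted to such points is a one-parameter family and opposite sign patterns around any witnesses force contradictory inequalities on the witnesses; a valid counterexample is two norm-$r$ points at a small angle $\theta$, which are $\gamma$-shattered for $\gamma$ small relative to $\theta$ yet satisfy $\|x_1+x_2\|\approx 2r>\sqrt{2}\,r$, so your broader point stands with this repair.
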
{}

The proof is identical to Lemma 1.3 in \cite{bartlett1999generalization}, and is hence omitted.

\begin{thmapplem}\label{lem:bd_fat}
    Let $\cF_u, \cF_l, A, B, r$ be as defined in definition~\ref{def:lb_class}. For a fixed $\gamma >0$, the $\gamma-$fat shattering dimension of $\cF_l$ can be bounded as follows: 
    \begin{align*}
        \text{fat}(\gamma, \cF_l) \leq \bigg( \frac{r \cdot (A + B)}{2 \gamma} \bigg)^2
    \end{align*}{}
\end{thmapplem}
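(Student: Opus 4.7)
The plan is to directly chain the lower bound from Lemma~\ref{lem:fat_lower} with the upper bound from Lemma~\ref{lem:fat_upper}, and then solve the resulting inequality for $n$, the size of a shattered set. Let $I = \{x_i\}_{i=1}^n \subseteq \cX$ be an arbitrary $\gamma$-shattered set for $\cF_l$, and fix any subset $I' \subseteq I$.

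First, I would apply Lemma~\ref{lem:fat_lower} to obtain
\begin{equation*}
\min_{q \in \{p, 2\}} \bigg\| \sum_{i \in I'} x_i - \sum_{i \in I\setminus I'} x_i \bigg\|_q \;\geq\; \frac{2n\gamma}{A + B}.
\end{equation*}
Next, observe that the minimum over $q \in \{p, 2\}$ of the norm is bounded above by the value at $q = 2$, namely $\|\sum_{i \in I'}x_i - \sum_{i \in I\setminus I'}x_i\|_2$. By Lemma~\ref{lem:fat_upper} (using the convention that $\|\cdot\|$ denotes the $2$-norm), this in turn is at most $\sqrt{n}\,r$. Chaining these together yields
\begin{equation*}
\frac{2n\gamma}{A + B} \;\leq\; \sqrt{n}\, r.
\end{equation*}

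Dividing both sides by $\sqrt{n}$ (valid since $n \geq 1$) and rearranging gives $\sqrt{n} \leq r(A+B)/(2\gamma)$, so $n \leq (r(A+B)/(2\gamma))^2$. Since $I$ was an arbitrary $\gamma$-shattered set, taking the supremum over all shattered sets gives the claimed bound on $\mathrm{fat}(\gamma, \cF_l)$.

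There is no real obstacle here: the two prior lemmas do all of the work, and the only subtlety is recognizing that the $\min$ over $q \in \{p,2\}$ in Lemma~\ref{lem:fat_lower} can be upper-bounded by the pure $2$-norm quantity controlled by Lemma~\ref{lem:fat_upper}, which is what allows the two inequalities to be composed without worrying about the choice of $p$.
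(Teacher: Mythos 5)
Your proof is correct and follows essentially the same route as the paper: chain Lemma~\ref{lem:fat_lower} and Lemma~\ref{lem:fat_upper} through the observation that $\min_{q\in\{p,2\}}\|\cdot\|_q \leq \|\cdot\|_2$, then solve the resulting inequality $\tfrac{2n\gamma}{A+B}\leq \sqrt{n}\,r$ for $n$. The paper presents exactly this chain of inequalities immediately after the lemma statement, so there is no substantive difference.
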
{}

Combining the results from Lemmas~\ref{lem:fat_upper} and~\ref{lem:fat_lower}, we get that: 
\begin{align*}
    \frac{2n\gamma}{A + B} \leq  \min_{q \in \{p, 2\}}\bigg\| \sum_{i \in I'}x_i - \sum_{i \in I\setminus I'}x_i \bigg\|_q \\
    \leq  \bigg\| \sum_{i \in I'}x_i - \sum_{i \in I\setminus I'}x_i \bigg\| \leq \sqrt{n} r,
\end{align*}{}
which gives us that:
\begin{align*}
     \sqrt{n} \leq \frac{r(A + B)}{2\gamma}, 
\end{align*}{}
which completes the proof.

%--------Proof of main hard margin theorem----------%
\begin{thmappthm}\label{lem:soft_margin_fat_supp}
Let $\cF^t_l$, $\cF^t_u$, $A$, $B$, and $r$ be as defined in definition~\ref{def:lb_class}, $z$, and $D$ as defined in theorem~\ref{thm:soft_margin},and $C_t$ be as defined in expression~\eqref{eqn:cortes_c}. For $f^t_l \in \cF^t_l$, $f^t_u \in \cF^t_u$ and any $\gamma>0$, with a probability $1-\delta$ over the draw of random samples, we have that:
\begin{equation}
\begin{aligned}
  \underline{R}_{f^l_t}(\gamma) & \leq \frac{4 C_t (k_t + \log\frac{1}{\delta} )}{3n_t} + \sqrt{\frac{8 d_2(p||p_t) ( k_t + \log\frac{1}{\delta})}{n_t}}.
 \end{aligned}
\end{equation}
where, for $t\in \{0,1\}$,
\begin{align*}
   k_t   & = \bigg\lceil \bigg(\frac{2r(A+B)}{\gamma}\bigg)^2  \log\bigg( \frac{8n_t(b-a)^2}{\gamma^2} \bigg) \\ & \log \bigg(\frac{4en_t(b-a)\gamma}{r^2(A+B)^2} \bigg)
   + \frac{D}{\gamma} \log \frac{e(n_t + \sfrac{D}{\gamma} -1)}{\sfrac{D}{\gamma}} \bigg\rceil~.
\end{align*}
\end{thmappthm}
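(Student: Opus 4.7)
The plan is to view Theorem~\ref{lem:soft_margin_fat_supp} as a specialization of Theorem~\ref{thm:soft_margin_general} in which we replace the two generic covering-number terms inside $k_t$ by explicit bounds tailored to the informative classes of Definition~\ref{def:lb_class}. First I would verify the applicability of Theorem~\ref{thm:soft_margin_general}: each $f\in \cF^t_l$ is a bounded linear functional on the compact set $\cX = \{x: \|x\|\leq r\}$ with range contained in a bounded interval $[a,b]$, so $\cF^t_l$ is a sturdy class (in the sense of Definition~\ref{def:sturdy}), which is exactly the hypothesis needed. This reduces the proof to bounding
\[
\log \cN(\gamma/2, \cF^t_l, 2n_t) \quad \text{and} \quad \log \cN(\gamma/2, L^D(\cX), 2n_t).
\]

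For the auxiliary-space term, Lemma~\ref{lem:aux_linear} applies verbatim and delivers
\[
\log \cN(\gamma/2, L^D(\cX), 2n_t) \leq \frac{D}{\gamma}\log\!\left(\frac{e(n_t + D/\gamma - 1)}{D/\gamma}\right),
\]
which is exactly the second summand of the claimed $k_t$. So essentially all of the new work is concentrated on the covering number of $\cF^t_l$ itself.

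For that covering number, the plan is a two-step upper bound: (i) apply Lemma~\ref{lem:bd_fat} to control the fat-shattering dimension, and (ii) convert fat-shattering to covering via a classical scale-sensitive result. Concretely, Lemma~\ref{lem:bd_fat} gives $\mathrm{fat}(\eta, \cF^t_l) \leq (r(A+B)/(2\eta))^2$ for any $\eta > 0$, and I would invoke the standard bound (originating with Alon--Ben-David--Cesa-Bianchi--Haussler and refined by Bartlett--Long--Williamson) of the form
\[
\log \cN(\eta, \cF, n) \;\leq\; \mathrm{fat}(\eta/4, \cF)\,\log\!\left(\tfrac{2en(b-a)}{\eta\,\mathrm{fat}(\eta/4,\cF)}\right)\log\!\left(\tfrac{2n(b-a)^2}{\eta^2}\right).
\]
Plugging in $\eta = \gamma/2$ and the fat-shattering bound at scale $\eta/4 = \gamma/8$, which yields $\mathrm{fat}(\gamma/8, \cF^t_l) \leq (4r(A+B)/\gamma)^2$ (up to the constant the paper absorbs in the suppressed log factors), then simplifying produces the first summand of $k_t$ in the stated form $(2r(A+B)/\gamma)^2 \log(8n_t(b-a)^2/\gamma^2)\log(4en_t(b-a)\gamma/(r^2(A+B)^2))$. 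Summing the two summands and substituting into Theorem~\ref{thm:soft_margin_general} finishes the argument.

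The main obstacle is step (ii): the fat-to-covering conversion is standard but sensitive to constants, and one must choose the inner scale ($\gamma/4$ vs.\ $\gamma/8$ etc.) carefully so that the final expression lines up with the $(2r(A+B)/\gamma)^2$ factor and the two specific log arguments displayed in the theorem. A secondary care point is that $\cF^t_u$ appears in the hypothesis only through the existence of nearby upper-bound functions, which is already packaged into Lemma~\ref{lem:bd_fat} via the $A+B$ factor; no additional handling of $\cF^t_u$ is needed beyond confirming that the pair $(\cF^t_l,\cF^t_u)$ satisfies Definition~\ref{def:lb_class}, which holds by assumption. Once the constants are matched, the rest of the proof is purely bookkeeping.
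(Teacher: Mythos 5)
Your proposal matches the paper's route exactly: start from the covering-number generalization bound with the auxiliary violation space (Theorem~\ref{thm:soft_margin_general}, equivalently Theorem~\ref{thm:soft_margin} after Lemma~\ref{lem:aux_linear}), then replace $\log\cN(\gamma/2,\cF,2n_t)$ by the fat-shattering bound of Lemma~\ref{lem:bd_fat} combined with the Alon/Bartlett--Long--Williamson fat-to-covering conversion (the paper cites this as Corollary~3.8 of \citet{shawe1998structural}). The only thing to reconcile is the inner scale in that conversion---the paper's $(2r(A+B)/\gamma)^2$ prefactor corresponds to evaluating Lemma~\ref{lem:bd_fat} at scale $\gamma/4$ rather than the $\gamma/8$ you used---but you explicitly flagged this bookkeeping issue, and the overall argument is the same.
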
{}
Using Corollary 3.8 \cite{shawe1998structural}, we can $\log \cN(\sfrac{\gamma}{2}, \cF, 2n_t)$ by its fat shattering dimension. Combining the results from lemma~\ref{lem:bd_fat} and theorem~\ref{thm:soft_margin}, we get the final result.

%---------------------------------------------------------%
%-----------SECTION: quantile regression------------------%
%---------------------------------------------------------%

\section{Equivalence to quantile regression}

Consider the following problem 
\begin{equation}\label{eq:separated}
\begin{aligned}
& \underset{f_u, f_l}{\text{minimize}}
& & \ell^{(1)}_{\tilde{w}}(f_u(x_i), f_l(x_i)) \\
& \text{subject to}
& & \sum_{i:t_i=t} \tilde{w}_{t_i} \max[y_i - f_u(x_i), 0] \leq \beta \\
& & & \sum_{i:t_i=t} \tilde{w}_{t_i} \max[f_l(x_i) - y_i, 0] \leq \beta \\
& & & f_u(x_i) \geq f_l(x_i), \ \ \forall i: t_i = t
\end{aligned}
\end{equation}

\begin{thmappthm}
Assume that \eqref{eq:separated} is strictly convex and has a strictly feasible solution. Then, for any fixed quantile $t \in (0.5, 1)$, there is a parameter $\beta \geq 0$ such that the minimizer of \eqref{eq:separated} with weighted absolute loss and the minimizer of the werighted quantile loss, for quantiles $(t, 1-t)$ with non-crossing constraints, are equal and have false coverage rate $1-q$.
\end{thmappthm}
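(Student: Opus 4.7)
The plan is to establish the equivalence via Lagrangian duality. Problem~\eqref{eq:separated} is convex and, by the assumption of strict feasibility (Slater's condition) combined with strict convexity, admits strong duality and a unique primal minimizer; the paired-quantile regression similarly has a unique minimizer. I would form the Lagrangian of~\eqref{eq:separated} with multipliers $\lambda_u,\lambda_l\ge 0$ on the two residual-sum constraints while retaining the non-crossing constraint $f_u\ge f_l$ in the domain. Because $|f_u(x_i)-f_l(x_i)|=f_u(x_i)-f_l(x_i)$ on the feasible set, the Lagrangian decouples additively into
\begin{align*}
L =\;& \sum_i \widetilde{w}_{t_i}\bigl[f_u(x_i)+\lambda_u\max(y_i-f_u(x_i),0)\bigr] \\
&+ \sum_i \widetilde{w}_{t_i}\bigl[-f_l(x_i)+\lambda_l\max(f_l(x_i)-y_i,0)\bigr] - (\lambda_u+\lambda_l)\beta.
\end{align*}

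The algebraic core of the proof is to rewrite each summand using the identity $f_u(x_i)-y_i=\max(f_u(x_i)-y_i,0)-\max(y_i-f_u(x_i),0)$ (and its mirror for $f_l$). Up to additive constants this recasts the $f_u$-part as $\sum_i\widetilde{w}_{t_i}[\max(f_u(x_i)-y_i,0)+(\lambda_u-1)\max(y_i-f_u(x_i),0)]$, and likewise for $f_l$. Setting $\lambda_u=\lambda_l=1/(1-q)$ and factoring out $(1-q)$, the Lagrangian (up to positive scaling and constants independent of $f_u,f_l$) becomes
\begin{equation*}
\sum_i\widetilde{w}_{t_i}\bigl[\rho_q(y_i-f_u(x_i))+\rho_{1-q}(y_i-f_l(x_i))\bigr],
\end{equation*}
where $\rho_\tau(r)=\tau\max(r,0)+(1-\tau)\max(-r,0)$ is the quantile check loss. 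Minimized subject to the surviving constraint $f_l\le f_u$, this is precisely the non-crossing weighted quantile regression objective at levels $q$ and $1-q$.

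To close the correspondence, let $(f_u^q,f_l^q)$ be the unique minimizer of the non-crossing quantile program for a fixed $q\in(0.5,1)$, and set $\beta:=\tfrac12\bigl(\sum_i\widetilde{w}_{t_i}\max(y_i-f_u^q(x_i),0)+\sum_i\widetilde{w}_{t_i}\max(f_l^q(x_i)-y_i,0)\bigr)$. A subgradient-optimality argument for the quantile loss shows the two weighted exceedance sums are equal at the minimum, so $(f_u^q,f_l^q)$ is primal-feasible and renders both constraints of~\eqref{eq:separated} tight. I would then verify the KKT conditions at $(f_u^q,f_l^q)$ with $\lambda_u^*=\lambda_l^*=1/(1-q)$: stationarity follows from the rewriting above, dual feasibility from $q<1$, primal feasibility by construction, and complementary slackness from activeness of both constraints; uniqueness from strict convexity pins down the match. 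The converse direction runs the argument backwards from any optimal primal-dual pair. Finally, the false-coverage claim follows from the standard subgradient characterization of the quantile loss: the weighted fraction of $y_i$ above $f_u^q$ equals $1-q$, and symmetrically the weighted fraction below $f_l^q$ equals $1-q$.

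The main obstacle will be handling the non-differentiability of the hinge and check losses, which forces the KKT analysis to be executed in terms of subdifferentials rather than gradients. In particular, ensuring that the two residual-sum constraints in~\eqref{eq:separated} are \emph{simultaneously} active---so that a single symmetric multiplier $\lambda_u=\lambda_l$ matches the single $\beta$ in the primal---relies on the symmetric pairing of quantiles $(q,1-q)$ together with the symmetric structure of the primal objective, and would not go through for arbitrary asymmetric level pairs. Strict convexity is essential here because it lets one pair specific minimizers rather than entire sets of minimizers on both sides of the equivalence.
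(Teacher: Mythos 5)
Your approach is the same as the paper's in its essentials: introduce multipliers $\lambda_u,\lambda_l\ge 0$ on the two exceedance constraints, invoke strong duality via Slater's condition, set $\lambda_u=\lambda_l=1/(1-q)$, and rearrange the resulting Lagrangian into a sum of the two check losses $\rho_q$ and $\rho_{1-q}$ with the non-crossing constraint retained. The algebraic identity you use, $f_u(x_i)-y_i = \max(f_u(x_i)-y_i,0)-\max(y_i-f_u(x_i),0)$, is the same manipulation the paper carries out by splitting sums over $\{y_i\ge f_u(x_i)\}$ and its complement and adding and subtracting $\sum_i y_i$; you are simply more explicit about it.

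Where you go beyond the paper --- attempting a full KKT verification to recover a concrete $\beta$ --- you introduce a claim that does not hold. You set $\beta:=\tfrac12\bigl(\sum_i\widetilde{w}_{t_i}\max(y_i-f_u^q(x_i),0)+\sum_i\widetilde{w}_{t_i}\max(f_l^q(x_i)-y_i,0)\bigr)$ and assert that ``a subgradient-optimality argument for the quantile loss shows the two weighted exceedance sums are equal at the minimum.'' That assertion is false in general. The first-order condition for the check loss $\rho_\tau$ controls the weighted \emph{count} of points above versus below the fitted surface (roughly a $\tau$/$1-\tau$ split), not the weighted \emph{sum of exceedance magnitudes}. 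Those magnitude sums are distributional quantities that depend on the tail shapes of the residuals on either side and are generically unequal even when the counts balance as dictated by $q$ and $1-q$; a one-dimensional example with an asymmetric $y$-distribution makes this immediate. So the single-$\beta$ formulation~\eqref{eq:separated} generally will not have both constraints active at the symmetric multiplier $1/(1-q)$, and complementary slackness fails for your proposed $\beta$. The fix is what the main text's version of the theorem already does: allow two separate parameters $\beta_u,\beta_l\ge 0$ and set each to its own constraint value at the quantile minimizer, $\beta_u:=\sum_i\widetilde{w}_{t_i}\max(y_i-f_u^q(x_i),0)$ and $\beta_l:=\sum_i\widetilde{w}_{t_i}\max(f_l^q(x_i)-y_i,0)$; then both constraints are tight by construction, KKT holds with $\lambda_u^*=\lambda_l^*=1/(1-q)$, and strict convexity identifies the minimizers.

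A second, smaller point: you state that the weighted fraction above $f_u^q$ and the weighted fraction below $f_l^q$ each equal $1-q$, which gives a total false coverage rate of $2(1-q)$, not $1-q$ as the theorem statement asserts. Also, in finite samples the check-loss subgradient condition yields these fractions only approximately (up to one sample's worth of mass), so exact equalities should be stated as inequalities up to $O(\max_i\widetilde{w}_{t_i})$.
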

\begin{proof}
Problem \eqref{eq:separated} with absolute loss $\ell(y, y') = |y - y'|$ can be stated as 
\begin{equation*}
\begin{aligned}
& \underset{f_u, f_l}{\text{minimize}}
& & \sum_{i: t_i =t} \tilde{w}_{t_i}| f_u(x_i)- f_l(x_i)| \\
& \text{subject to}
& & \sum_{i:t_i=t} \tilde{w}_{t_i} \max[y_i - f_u(x_i), 0] \leq \beta \\
& & & \sum_{i:t_i=t} \tilde{w}_{t_i} \max[f_l(x_i) - y_i, 0] \leq \beta \\
& & & f_u(x_i) \geq f_l(x_i), \ \ \forall i: t_i = t
\end{aligned}
\end{equation*}

Let $Q_\beta(f_u,f_l) = \tilde{w}_{t_i}| f_u(x_i)- f_l(x_i)|$ denote the objective and $F$ the feasibility region.
Introducing Lagrange multipliers for the first two constraints, we obtain the regularized objective
\begin{align*}
L(f_u, f_l, \lambda_u, \lambda_l) & = \sum_{i: t_i =t} \tilde{w}_{t_i}|f_u(x_i) - f_l(x_i)| \\
& + \frac{\lambda_u}{n}\sum_{i=1}^n \max(y_i - f_u(x_i), 0) - \beta \\
& + \frac{\lambda_l}{n}\sum_{i=1}^n \max(f_l(x_i) - y_i, 0) - \beta
\end{align*}
and by convexity and strict feasibility, strong duality holds through Slater's condition,
$$
\min_{u, l \in F} Q_\beta(u, l) = \max_{\lambda_u, \lambda_l \geq 0} \min_{u \geq l} L(u, l, \lambda_u, \lambda_l)~.
$$

By strict convexity, for each $\beta \geq 0$, the minimizers $u^*, l^*$ on either side are equal for the maximizers $\lambda_u^*, \lambda_l^*$. Now, consider the  following objective, equivalent in minima to $\tilde{L}(f_u, f_l, \lambda_u, \lambda_l)$, 
\begin{align*}
\tilde{L}(f_u, f_l, \lambda_u, \lambda_l)& := \sum_{i:t_i=t} \tilde{w}_{t_i} |f_u(x_i) - f_l(x_i)| \\
& + \lambda_u \sum_{i:t_i=t} \tilde{w}_{t_i} \max(y_i - f_u(x_i), 0) \\
& + \lambda_l \sum_{i:t_i=t} \tilde{w}_{t_i} \max(f_l(x_i) - y_i, 0)
\end{align*}
We can separate $\tilde{L}$ into terms for which $y_i \geq f_u(x_i)$ and $y_i \geq f_l(x_i)$ respectively, adding and subtracting $\sum_i y_i$
\begin{align*}
&  \tilde{L}(f_u, f_l, \lambda_u, \lambda_l)  \\
& = (\lambda_u - 1) \sum_{y_i \geq u(x_i)} \tilde{w}_{t_i}(y_i - f_u(x_i)) - \sum_{y_i < f_u(x_i)} \tilde{w}_{t_i}(y_i - f_u(x_i)) \\
 & + (1 - \lambda_l)\sum_{y_i \geq f_l(x_i)} \tilde{w}_{t_i}(y_i - f_l(x_i)) - \sum_{y_i < f_l(x_i)} \tilde{w}_{t_i}(y_i - f_l(x_i))
\end{align*}
Now, let $\lambda_u = \lambda_l = 1/(1-q)$ for $q\in (0,1)$, which means $(1-q) \geq 0$. Multiplying by $(1-q)$ leaves us with
\begin{align*}
& \tilde{L}(f_u, f_l,\lambda_u, \lambda_l) \\
& \propto \sum_{y_i \geq f_u(x_i)} q \cdot \tilde{w}_{t_i}(y_i - f_u(x_i)) + \\
& \sum_{y_i < f_u(x_i)} (q-1) \cdot \tilde{w}_{t_i}(y_i - f_u(x_i)) \\
& + \sum_{y_i \geq f_u(x_i)} (1-q) \cdot \tilde{w}_{t_i}(y_i - f_l(x_i)) \\
& +  \sum_{y_i < f_u(x_i)} (-q) \cdot \tilde{w}_{t_i} (y_i - f_l(x_i)) \\
& \propto \sum_{i:t_i=t} \tilde{w}_{t_i} \max[q(y_i - f_u(x_i)), (q-1)(y_i - f_u(x_i)] \\
& +  \sum_{i:t_i=t} \tilde{w}_{t_i} \max[(1-q)(y_i - f_l(x_i)), (-q)(y_i - f_l(x_i)] \\
& = \sum_{i:t_i=t} \rho^{(q)}_{\tilde{w}_{t_i}}(y_i - f_u(x_i)) + \rho^{(1-q)}_{\tilde{w}_{t_i}}(y_i - f_l(x_i))~,
\end{align*}
where $\rho^{(q)}_{\tilde{w}}$ is the weighted quantile loss for quantile $q$. Recalling that our original problem had the constraint $f_u(x_i) \geq f_l(x_i)$, we recover the non-crossing constraint. 
\end{proof}

%---------------------------------------------------------%
%-----------SECTION: xvalidation--------------------------%
%---------------------------------------------------------%

\section{Cross-validation algorithm}
Define $\Omega$ denote a set of candidate hyperparameters. Suppose we have $M$ possible hyperparameters, cross-validating BP proceeds as follows: 
\begin{algorithm}[H]
  \caption{BP cross-validation for $M$ sets of hyperparameters, and required FCR = $\nu$}
  \label{alg:xval}
\begin{algorithmic}
  \STATE {\bfseries Input:} $\cD = \{x_i, t_i, y_i, w_i\}$, $p$, $\nu$, $\{ \Omega \}^M$
  \STATE {\bfseries Output:} $ \Omega^*$
  \STATE Split $\cD$ into $\cD_{\text{train}}$, $\cD_{\text{validate}}$
  \FOR{$m=1$ {\bfseries to} $M$}
        \STATE Use $\cD_{\text{train}}$ to solve problem~\eqref{eqn:indep_obj_const} or~\eqref{eqn:coupled_obj_const}
        \STATE Estimate $\hat{\nu}^{(m)}$, and $||\widehat{\text{IW}}||_p^{(m)}$ on $\cD_{\text{validate}}$
  \ENDFOR
  \STATE  Define $M' = \{m: \hat{\nu}^{(m)} \leq \nu \}$
  \STATE Set $\Omega^*:= \min_{m \in M'} ||\widehat{\text{IW}}||_p^{(m)}$
\end{algorithmic}
\end{algorithm}

%---------------------------------------------------------%
%-----------SECTION: Experiments--------------------------%
%---------------------------------------------------------%
\section{Experiments}

\subsection{Cross-validation details}
For our BP method, we have 5 hyperparameters to pick. These are $\alpha$, the regularization parameter, the kernel bandwidth, $\beta_u$ and $\beta_l$ which are the allowed violations. The last parameter, $\gamma_{BP} >0$, as described in section~\ref{sect:xval}. Note that the kernel bandwidth is only relevant for the experiments done on the ACIC data, but not the IST experiments since a linear kernel is used in the latter. 

For the kernel regression (KR), we first split the training data into 2. On the first half, we do the typical 3-fold cross-validation to pick the model that minimizes the weighted empirical error. This allows us to pick the kernel bandwidth, and a regularization parameter the is multiplied by the L2 norm of the weights. Again, the kernel bandwidth is only relevant for the experiments done on the ACIC data, but not the IST experiments since a linear kernel is used in the latter. The intervals are then estimated in one of two ways. For KR-MI, we use the second part of the training data to estimate the residuals. We follow algorithm 2 in \cite{lei2018distribution} to get the final interval estimates. For KR-$\gamma$, we use the second half of the training data to estimate the FCR, $\hat{\nu}_{\gamma_{\text{KR}}}$, with $\gamma_{KR}$ defined as the ``shifting'' parameter, where $\tilde{f}^{KR}_u(x_i) = \tilde{\mu}_t(x_i) + \gamma_{KR}$ and $\tilde{f}^{KR}_l(x_i) = \tilde{\mu}_t(x_i) - \gamma_{KR}$, for $\tilde{\mu}_t(x_i)$ being the predicted response value. We then pick the smallest $\gamma_{KR}$ that does not violated the required FCR.

For the Gaussian process (GP), we pick the kernel bandwidth, the noise level added to the diagonal of the kernel. For BART models, we use the BartMachine package in R \cite{bartR}. We do 3 fold cross-validation to pick the parameter $k$, which controls the prior probability that $\mathbb{E}(y|x)$ is contained in the interval (ymin,ymax), based on a normal distribution. We set the number of trees to be 200, since that did not seem to affect the results. For the CMGP, we pick the lengthscale of the RBF kernels of the two response surfaces as well as the variance and correlation parameters.

\subsection{Additional IST details}
Figure~\ref{fig:hist} shows the histogram of the ages in the training data for the treated and the control population. Ages$>70$ were downsampled to introduce a confounding effect. 
\begin{figure}[H]
\centering
\includegraphics[width=\columnwidth]{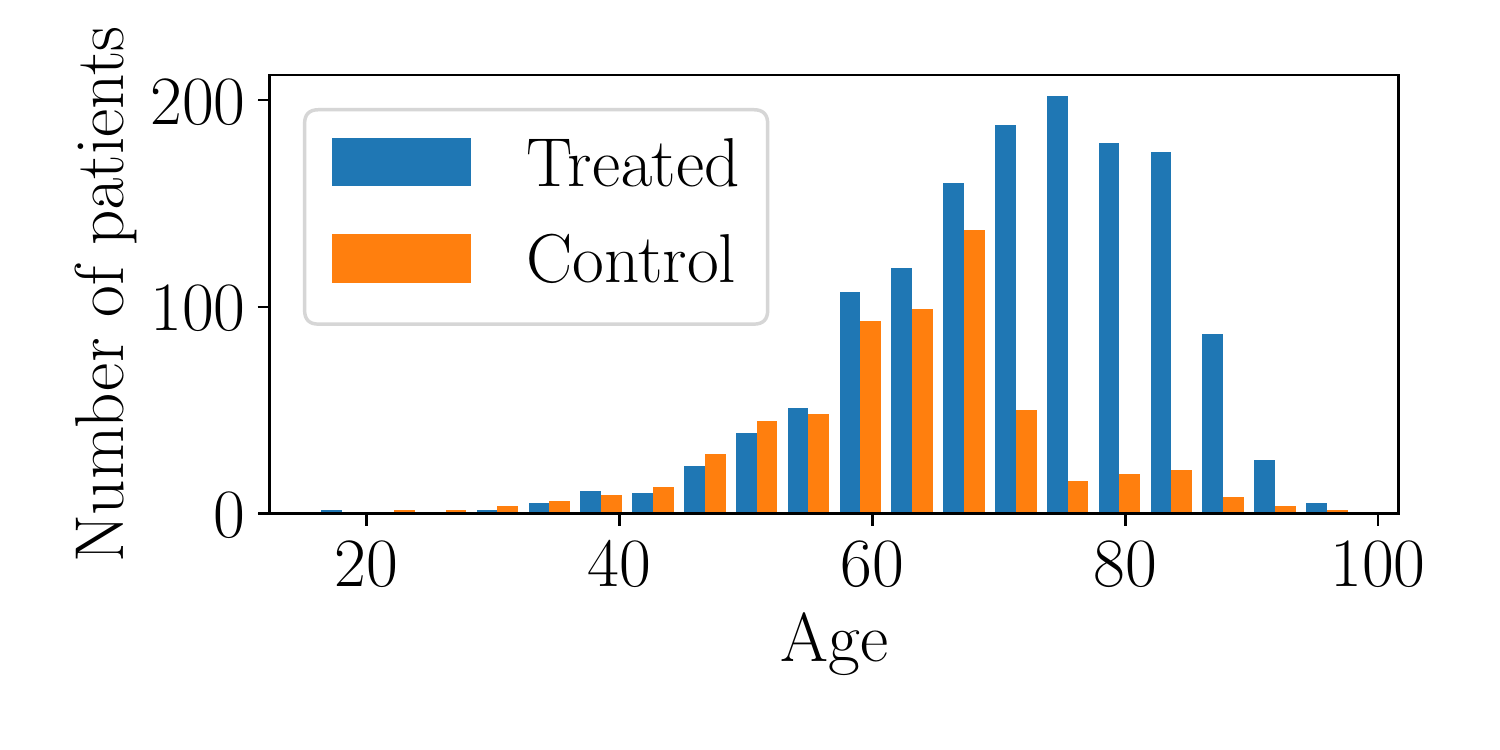}%
\caption{Distribution of data in the IST experiment \label{fig:hist}}
\end{figure}

\subsection{Additional IST results (heteroskedasticity)}\label{sect:ist_hsk}
In this section we analyze the performance of our model when the well-behavedness assumption is violated, specifically when there is heteroskedasticity. We use the IST data, and follow the same train/test splits as is done in the main paper. Here, we focus on the outcome under treatment, $Y(1)$ only. Specifically, we generate the outcome under treatment as $Y(1) = x^2 + \epsilon$, where $x$ is the age rescaled to fall between -2, 2, and $\epsilon_i$ is drawn from a Gaussian distribution with mean 0 and standard deviation = $0.1$ if $x \leq 0$, and from a Gaussian distribution with mean 0 and standard deviation = $0.1 + x$ otherwise. We set the required FCR to be $\leq 0.01$. Since our main aim is to analyze how the different models perform when when heteroskedasticity occurs, we focus only on tightness of bounds as an objective.  

Figure~\ref{tab:ist_hsk} shows the results from averaged over 20 simulations. It shows that of all the models that achieve the required FCR, BP-D-L2 achieves the tightest intervals. Figure~\ref{fig:ist_hsk} shows why: neither BP-D-L2 and QR (equivalent to BP-D-L1) make assumptions about well-behavedness of the residual distribution. They git adaptive intervals, which are tight when the heteroskedastic noise is low, and loose when it is high.

\begin{table}[h]
\centering
\caption{IST heteroskedasticity results. Table shows results averaged over 20 simulations~\ref{fig:ist_hsk}. \label{tab:ist_hsk}}
\begin{tabular}{l|lll}
\toprule
Model &          FCR &       Mean IW &         Max IW \\
\midrule
BP-D-L2         &   0.007 (0.5)  &  5.55 (0.56)  &  10.68 (2.35)  \\
% BP-D-L$_\infty$ &  1.0 (0.49)  &  6.63 (0.74)  &   6.63 (0.74)  \\
QR/BP-D-L1         &  0.006 (0.31)  &  6.49 (0.96)  &  11.63 (2.37)  \\
KR-$\gamma$     &  0.065 (0.86)  &  3.98 (0.06)  &   3.98 (0.06)  \\
KR-CI           &  0.007 (0.52)  &  6.94 (0.69)  &   6.94 (0.69)  \\
\bottomrule
\end{tabular}
\end{table}

\begin{figure}[h]
    \centering
    \includegraphics[width = \columnwidth]{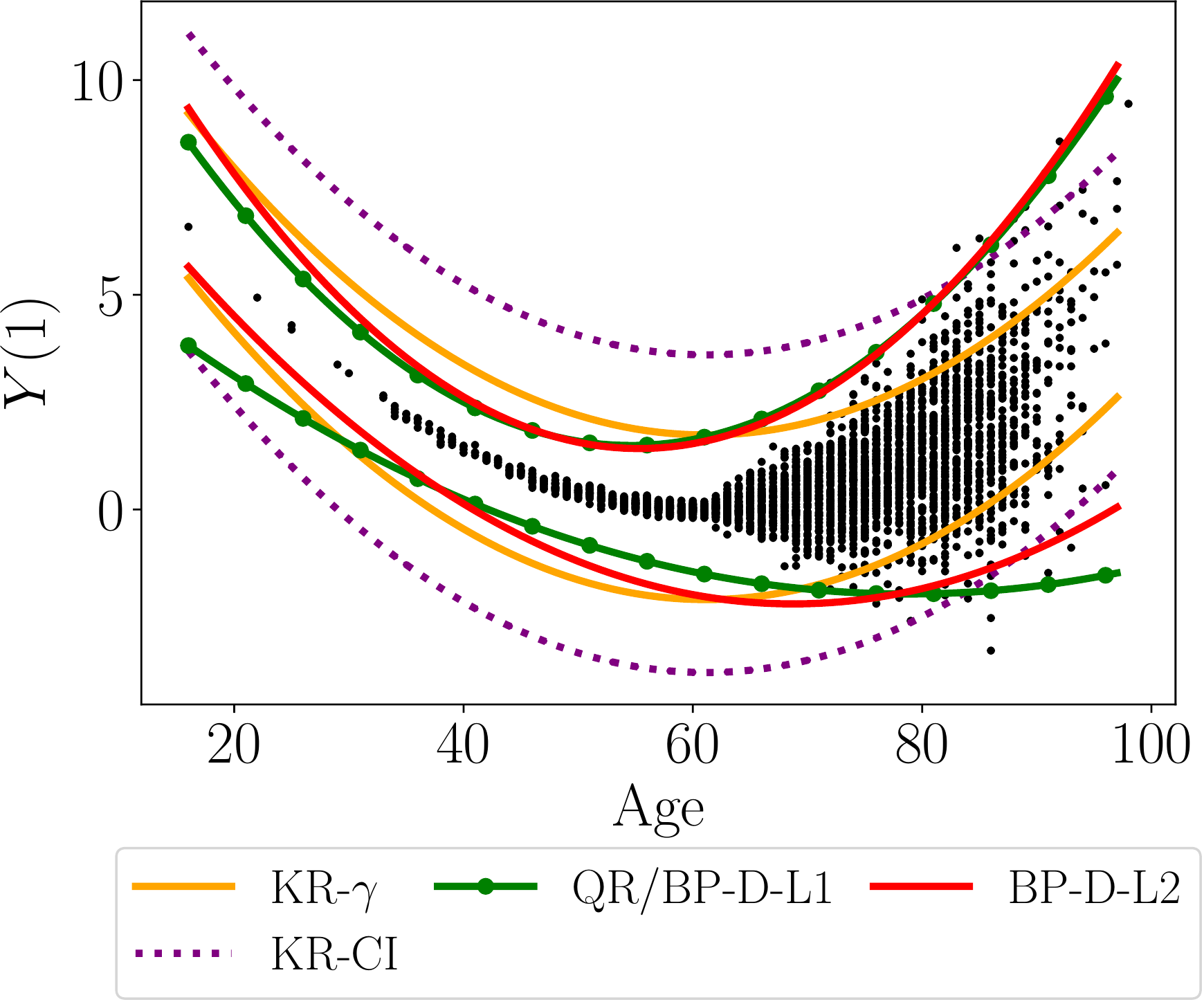}
    \caption{IST heteroskedasticity results. Plot shows results from a single simulation. Black dots show potential outcomes on the test set, lines show fitted values. The plot show that BP-D-L2 and QR (equivalent to BP-D-L1) are the only ones that are able to fit \emph{adaptive} intervals (wider where there is high heteroskedasticity). BP-D-L2 achieves the tightest intervals on average. }
    \label{fig:ist_hsk}
\end{figure}

\subsection{ACIC results including CCI}\label{sect:acic_cci}
Figure~\ref{fig:acic_cci} is similar to figure~\ref{fig:acic} presented in the main paper but includes the performance of CCI models. 

\begin{figure*}
% Y1
  \begin{minipage}{.5\textwidth}
  \captionsetup[subfigure]{justification=centering}
    \begin{subfigure}{\columnwidth}
        \centering
        \caption{ Comparing tightness of estimated intervals \label{fig:acic0_cci}}
        \includegraphics[width=.9\textwidth]{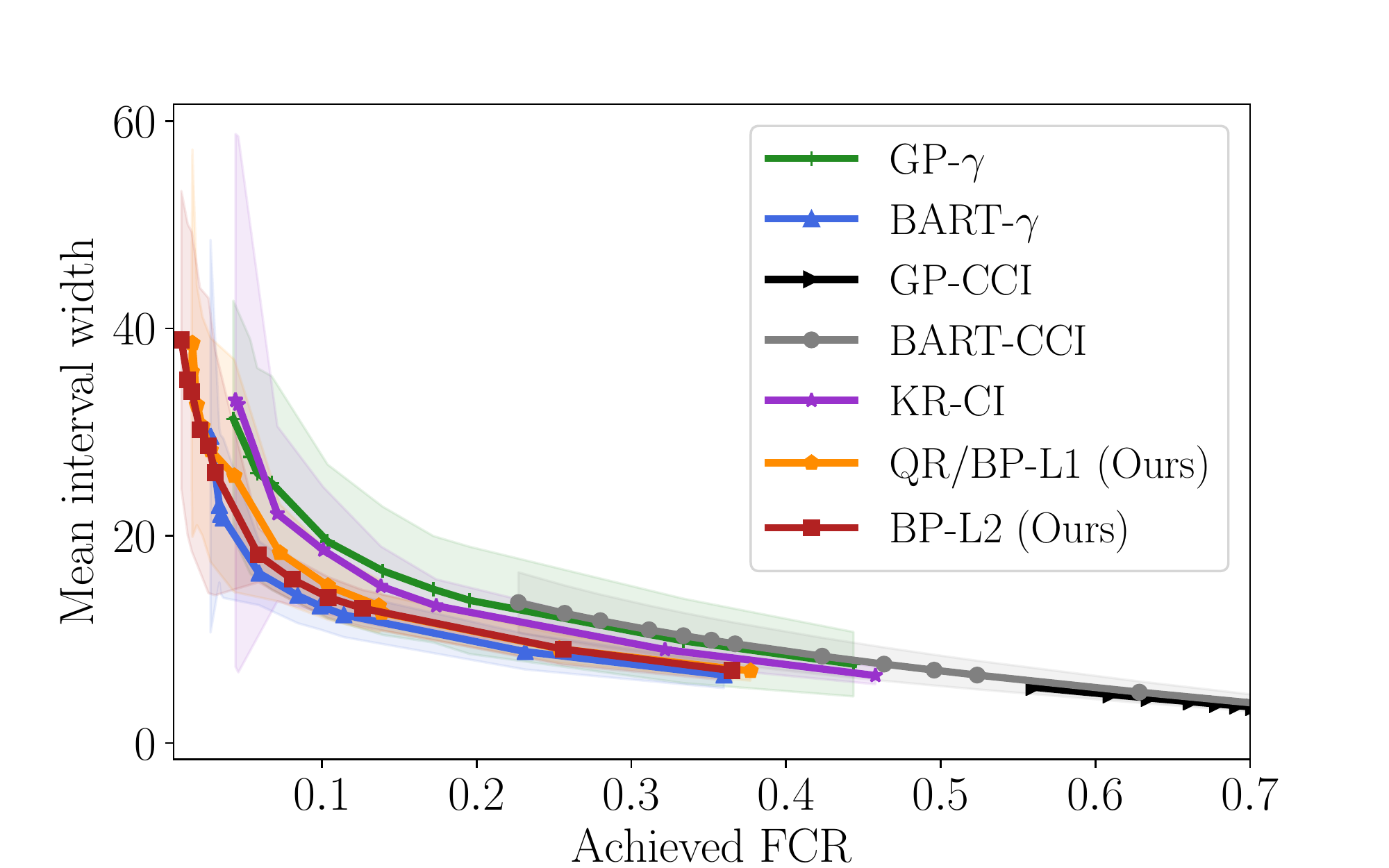}%
        %\\ against baselines  
    \end{subfigure}
  \end{minipage}
% Y0 
  \begin{minipage}{.50\textwidth}
  \captionsetup[subfigure]{justification=centering}
    \begin{subfigure}{\columnwidth}
        \centering
        \caption{Comparing violation to the required FCR\label{fig:acic1_cci}}
        \includegraphics[width=.9\linewidth]{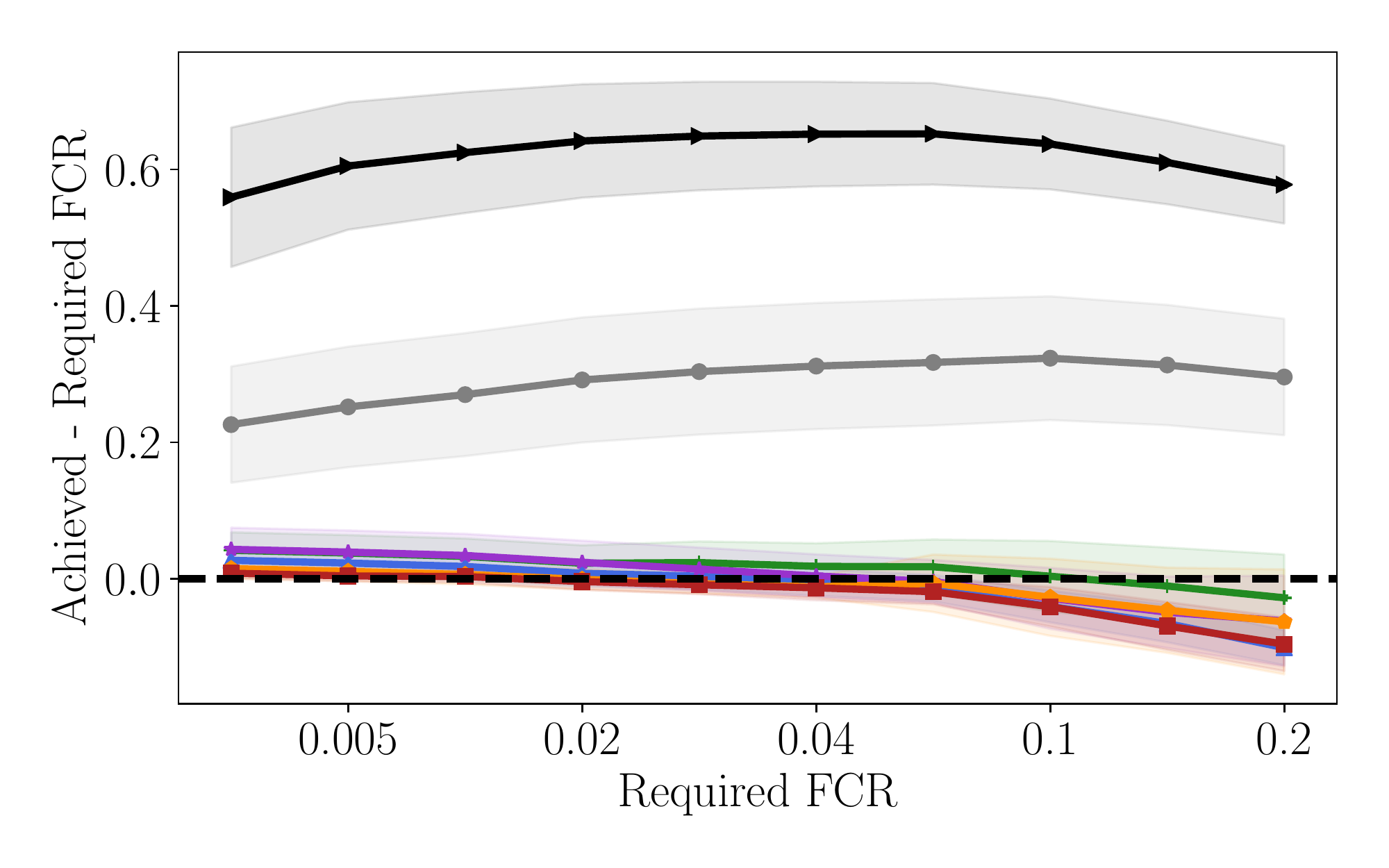}%

    \end{subfigure}
  \end{minipage}

  \caption{ACIC results. Plots show results averaged over 20 simulations. Plot~\ref{fig:acic0_cci} shows the mean interval width for different values of the achieved FCR on a held-out test set. Plot~\ref{fig:acic1_cci} shares the same legend as plot~\ref{fig:acic0_cci}, and shows the violation of the required FCR (= achieved - required) at different values of required FCR. Models above the dotted black line are in violation of the required FCR. The two plots show that BP achieves a mean interval width comparable to that of BART but at a lower violation of the required FCR. BP outperforms all kernel-based methods in terms of mean interval width and violation to the required FCR. CCI methods achieve the worst violations.
\vspace{-1.2em} \label{fig:acic_cci}}
\end{figure*}

\subsection{Additional ACIC results}\label{sect:acic_1k}
We consider a larger sample size than that presented in the main paper. Instead of sampling $n = 200$ for training and validation of the main model, we sample $n = 1000$. In this setting, we are better able to fit the true outcomes since the larger sample size affords us the ability to fit more complex models. Figure~\ref{fig:acic_1k} shows the results. Once again we see that our models outperform all kernel based methods. Here we see that BART-$gamma$ achieves a tighter interval width than our model for the same level of FCR violation. This highlights the strength of tree based models in that they fit highly adaptive ``kernels''. 

\begin{figure*}
% Y1
  \begin{minipage}{.5\textwidth}
  \captionsetup[subfigure]{justification=centering}
    \begin{subfigure}{\columnwidth}
        \centering
        \caption{ Comparing tightness of estimated intervals \label{fig:acic0_1k}}
        \includegraphics[width=.9\textwidth]{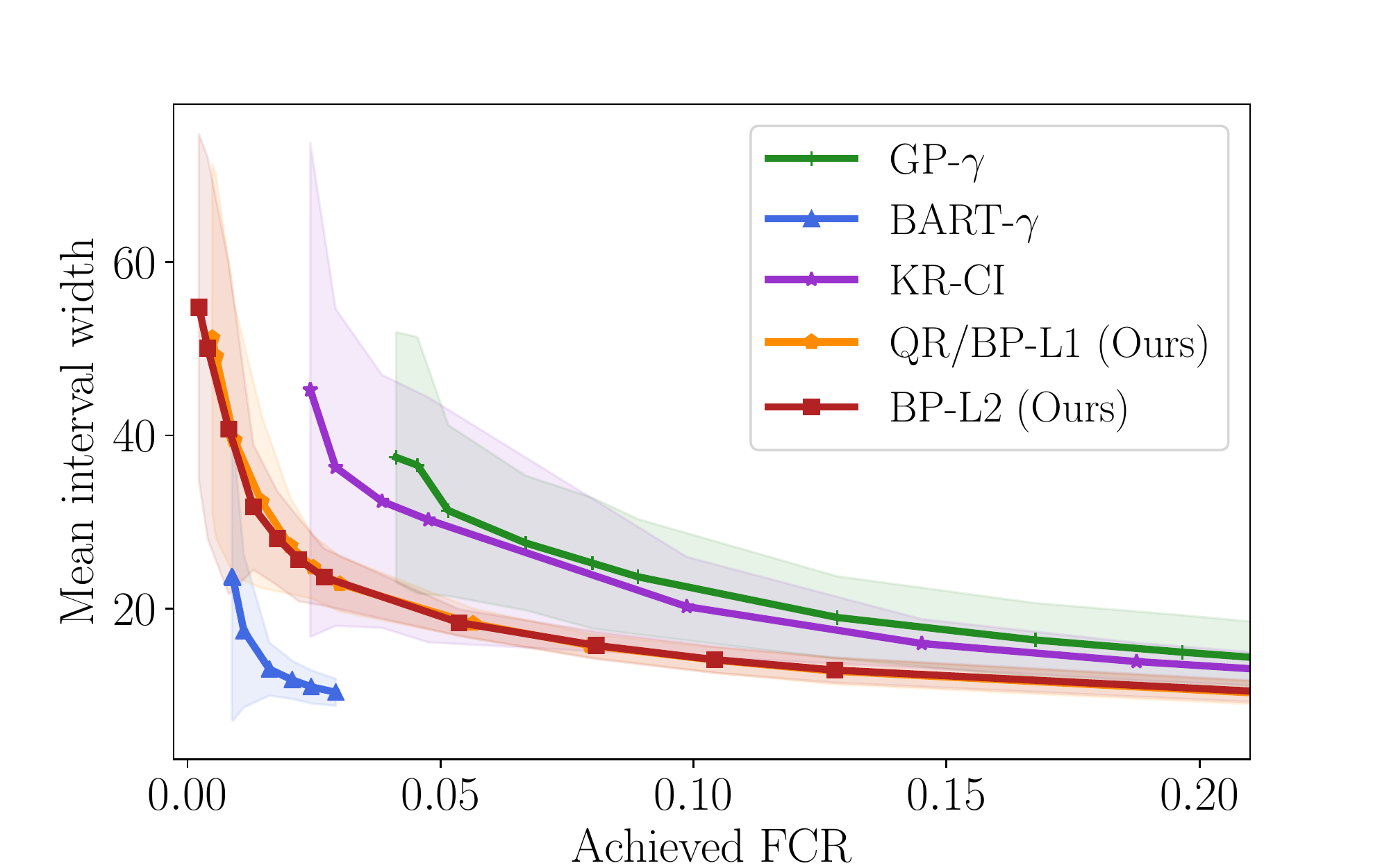}%
        %\\ against baselines  
    \end{subfigure}
  \end{minipage}
% Y0 
  \begin{minipage}{.50\textwidth}
  \captionsetup[subfigure]{justification=centering}
    \begin{subfigure}{\columnwidth}
        \centering
        \caption{Comparing violation to the required FCR\label{fig:acic1_1k}}
        \includegraphics[width=.9\linewidth]{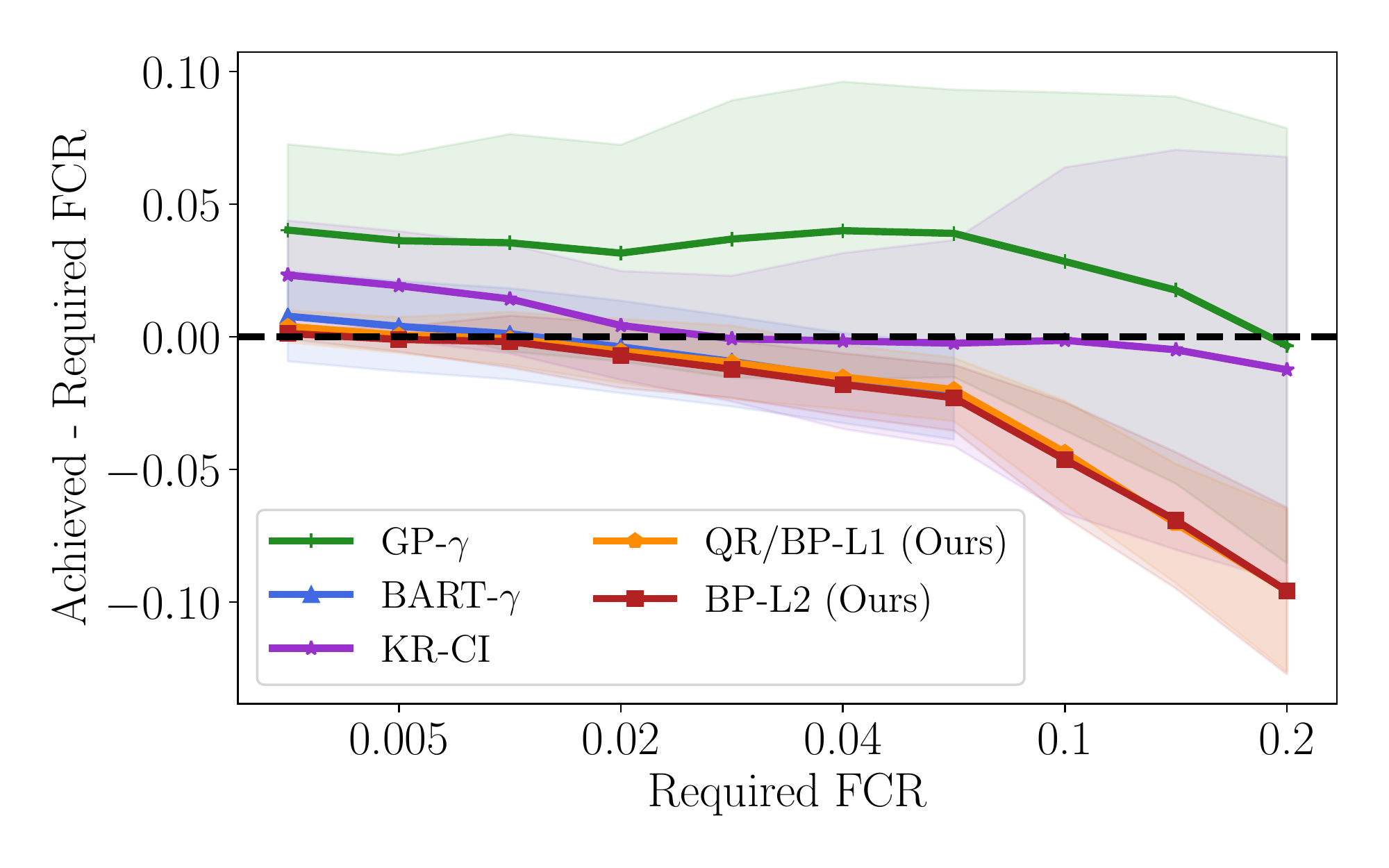}%

    \end{subfigure}
  \end{minipage}

  \caption{ACIC results. Plots show results averaged over 20 simulations. Plot~\ref{fig:acic0_1k} shows the mean interval width for different values of the achieved FCR on a held-out test set. Plot~\ref{fig:acic1_1k} shares the same legend as plot~\ref{fig:acic0_1k}, and shows the violation of the required FCR (= achieved - required) at different values of required FCR. Models above the dotted black line are in violation of the required FCR. The two plots show that BP achieves a mean interval width comparable to that of BART but at a lower violation of the required FCR. BP outperforms all kernel-based methods in terms of mean interval width and violation to the required FCR. CCI methods achieve the worst violations.
\vspace{-1.2em} \label{fig:acic_1k}}
\end{figure*}

\end{document}